\documentclass[12pt]{article}

\usepackage[a4paper,margin=1in]{geometry}
\usepackage{amsmath,amssymb}
\usepackage{natbib}
\usepackage{hyperref}
\usepackage{orcidlink}
\usepackage{enumitem}
\usepackage{booktabs}
\usepackage{microtype}
\usepackage{placeins}

\usepackage{algorithm}
\usepackage{algpseudocode}
\usepackage{graphicx}
\usepackage{amsthm}
\usepackage{tikz}
\usetikzlibrary{arrows.meta,positioning,shapes.geometric,calc,fit,backgrounds}

\graphicspath{{./}}

\usepackage{dsfont}
\usepackage{mathtools}
\usepackage{amsfonts}
\usepackage{nicefrac}
\usepackage{xcolor}
\usepackage{url}

\newtheorem{theorem}{Theorem}[section]

\newtheorem{lemma}[theorem]{Lemma}

\newtheorem{assumption}[theorem]{Assumption}

\newtheorem{remark}[theorem]{Remark}

\newcommand{\X}{\mathcal{X}}
\newcommand{\Y}{\mathcal{Y}}
\newcommand{\V}{\mathcal{V}}
\newcommand{\F}{\mathcal{F}}
\newcommand{\E}{\mathbb{E}}
\newcommand{\Pbb}{\mathbb{P}}

\newcommand{\1}{\mathbf{1}}
\newcommand{\KL}{\mathrm{KL}}

\newcommand{\swarm}{\mathrm{swarm}}
\newcommand{\calsym}{\mathrm{cal}}
\newcommand{\rag}{\mathrm{RAG}}
\newcommand{\fl}{\mathrm{FL}}
\newcommand{\train}{\mathrm{train}}
\newcommand{\comm}{\mathrm{comm}}

\title{Anytime-Valid Federated Conformal RAG for LLM Swarms}
\author{%
  Prasanjit Dubey\,\orcidlink{0000-0002-3667-5507}%
  \qquad
  Xiaoming Huo\,\orcidlink{0000-0003-0101-1206}\\
  H.~Milton Stewart School of Industrial and Systems Engineering,\\
  Georgia Institute of Technology, Atlanta, GA 30332, U.S.A.
}
\date{}

\begin{document}
\let\oldthefootnote\thefootnote
\renewcommand{\thefootnote}{}
\maketitle
\let\thefootnote\oldthefootnote
\setcounter{footnote}{0}

\begin{abstract}
Federated Conformal RAG (FC-RAG)~\citep{fcrag2026} provides distribution-free coverage for a bandwidth-limited swarm of weak language models, but only at a fixed horizon. We extend it to anytime-valid sequential coverage: validity at every stopping time, preserved under predictable adaptive control (recalibration, per-node bandwidth escalation, distilled-student refresh), at no extra cost in assumptions over fixed-horizon FC-RAG. Naive composition of fixed-horizon FC-RAG with off-the-shelf sequential testing fails because FC-RAG's marginal coverage bound makes the natural betting e-process a non-supermartingale on adverse calibration draws, and Ville's inequality cannot be invoked. We give \emph{Anytime-FC-RAG}, a sequential extension built on a \emph{summable per-step calibration-deviation budget} that converts the marginal bound into a strict conditional bound on a calibration-good event, paired with a \emph{truncated betting e-process} that is a nonnegative supermartingale on the entire probability space. From these two ingredients, we obtain four guarantees: time-uniform alarm validity $\Pbb(\sup_t E_t \ge 1/\delta_e) \le \delta_e + \delta_{\calsym}$, a Hoeffding-stitched cumulative-miscoverage envelope at the same total budget, safety under any predictable controller (recalibration, bandwidth escalation, student refresh), and training-side error propagation across an unbounded sequence of Federated Probe-Logit Distillation (FPLD) refreshes via a summable training budget. As a practical consequence, an adaptive controller that escalates retrieval bandwidth only when the e-process crosses a warning threshold matches the alarm rate of a fixed-high-bandwidth schedule at substantially lower communication cost. Synthetic and end-to-end experiments on a GPT-2-small + MiniLM swarm across MMLU, DBpedia, and AG News verify the predicted alarm rate, detection delay, envelope coverage, and $14$--$57\%$ bandwidth savings; the alarm fires when and only when coverage genuinely breaks, not on every drift.
\end{abstract}

\section{Introduction}\label{sec:intro}
Federated Conformal RAG (FC-RAG)~\citep{fcrag2026} provides a distribution-free coverage guarantee for the answer set produced by a bandwidth-limited swarm of weak language models, but only at a fixed horizon. Real deployments are sequential: operators inspect coverage across the answered-query stream, recalibrate when the rolling buffer freshens, and may escalate per-node retrieval bandwidth or refresh a distilled student in response to observed drift. The fixed-horizon guarantee does not apply once any of these actions is taken. This paper asks how to deliver time-uniform coverage and safe adaptive control on top of FC-RAG without strengthening the i.i.d.-deployment-data assumption that fixed-horizon FC-RAG already relies on, and how to propagate the underlying federated training rate across an unbounded sequence of student refreshes while preserving the sequential guarantee.

As a running example throughout this paper, consider the $K=4$ topic-specialized GPT-2-small + MiniLM retrieval swarm of~\citep[\S 7.5]{fcrag2026}: four nodes serving MMLU subjects \texttt{high\_school\_statistics}, \texttt{high\_school\_physics}, \texttt{high\_school\_biology}, and \texttt{high\_school\_world\_history}, each retrieving from its own subject-specific corpus and uploading bandwidth-limited score summaries to a hub that emits a conformal answer set at level $1-\alpha = 0.9$. A reasonable operational goal is to monitor coverage across the answered-query stream and, if the corpus on \texttt{high\_school\_biology} drifts after an update, fire an alarm and selectively escalate that node's bandwidth without forfeiting the statistical guarantee already accumulated on the other three subjects.

This operational goal is harder than a naive composition of fixed-horizon FC-RAG and off-the-shelf sequential testing would suggest. The natural betting e-process built on FC-RAG's marginal coverage bound fails the supermartingale property on adverse calibration draws, so Ville's inequality cannot be applied off-the-shelf. Adaptive controller actions further change the test centering mid-stream in a way that compounds the difficulty, and the underlying student model may be refreshed multiple times during deployment, so training-side error must propagate cleanly across an unbounded sequence of refreshes. The construction we introduce closes all three gaps via a summable per-step calibration-deviation budget paired with a truncated supermartingale, and the resulting guarantees apply to any sequential conformal protocol with a predictable slack decomposition, beyond the RAG instance considered here (Section~\ref{sec:discussion}).

\paragraph{Limitations of prior work.} Three lines of work each cover a piece of this question but leave a gap.
\begin{itemize}[leftmargin=*, itemsep=1pt]
\item \textbf{Fixed-horizon FC-RAG.} The base FC-RAG protocol~\citep{fcrag2026} controls a one-shot marginal coverage guarantee; sequential monitoring with optional stopping or adaptive intervention breaks the guarantee.
\item \textbf{Single-site conformal RAG.} TRAQ~\citep{li2024traq} and conformal-RAG-style work~\citep{chakraborty2025principled} apply split-conformal to RAG QA pipelines but assume one model, one corpus, and one calibration set, with no federation, no bandwidth charging, and no sequential validity.
\item \textbf{Conformal test martingales.} Conformal test martingales~\citep{vovk2021retrain} give a time-uniform changepoint-detection construction over exchangeable data, but in a single-site setting with no slack decomposition (no $\Delta_{\fl}$, $\Delta_{\rag}$, $\Delta_{\train}$) and no model for federated calibration with bandwidth budgets.
\end{itemize}
Centralized sequential testing tools~\citep{ramdas2023game,howard2021time,howard2021nonparametric,waudby2024betting,gauthier2025evalues,hultberg2026anytime,gibbs2021adaptive,angelopoulos2024online} and federated conformal methods~\citep{lu2023federated,plassier2023conformal,wen2026gcfcp,xu2025fedccp} cover related ground but not this composition; no prior result gives a time-uniform reliability guarantee for a federated LLM swarm whose retrieval and calibration messages are simultaneously communication-constrained. Table~\ref{tab:capability-comparison} summarizes the joint capability gap.

\begin{table}[htbp]
\centering\footnotesize
\caption{Capabilities of related approaches. The combination of time-uniform validity, a bandwidth-charged slack decomposition, federated calibration with summary compression, training-side propagation, and validity under predictable controller actions is jointly absent from prior work.}
\label{tab:capability-comparison}
\renewcommand{\arraystretch}{1.1}
\resizebox{\textwidth}{!}{%
\begin{tabular}{@{}p{6cm}ccccc@{}}
\toprule
Approach & \shortstack{Time-\\uniform} & \shortstack{Bandwidth-\\charged slack} & \shortstack{Federated\\calibration} & \shortstack{Training\\propagation} & \shortstack{Predictable\\controller} \\
\midrule
FC-RAG\newline{\scriptsize\quad\citep{fcrag2026}} & $\times$ & \checkmark & \checkmark & \checkmark & $\times$ \\[7pt]
Conformal test martingales\newline{\scriptsize\quad\citep{vovk2021retrain}} & \checkmark & $\times$ & $\times$ & $\times$ & $\times$ \\[7pt]
Online conformal\newline{\scriptsize\quad\citep{gibbs2021adaptive,angelopoulos2024online}} & \checkmark$^{*}$ & $\times$ & $\times$ & $\times$ & --- \\[7pt]
Federated conformal\newline{\scriptsize\quad\citep{lu2023federated,plassier2023conformal,wen2026gcfcp,xu2025fedccp}} & $\times$ & partial & \checkmark & $\times$ & $\times$ \\[7pt]
\textbf{Anytime-FC-RAG (ours)} & \checkmark & \checkmark & \checkmark & \checkmark & \checkmark \\
\bottomrule
\end{tabular}}\\[2pt]
$^{*}$\,Long-run average coverage via adaptive step size; not strictly per-step time-uniform.
\end{table}

\paragraph{Constraints and goal.}
We adopt three operational constraints inherited from
FC-RAG~\citep{fcrag2026}: (i) no gradient or weight exchange; (ii) no
data pooling; (iii) per-uplink budgets $B_{i,t}$ (per-query
inference) and $B_t^{\calsym}$ (per-refresh calibration) are
first-class. The goal is to characterize whether a sequential
extension can admit a strict conditional bound $\E[M_t \mid \F_{t-1}]
\le b_t = \alpha + 1/(n_{\calsym,t}+1) + \Delta_{\fl,t} + \Delta_{\rag,t}
+ \Delta_{\train,t}$ on a calibration-good event $G_t$, with the same
slack decomposition as fixed-horizon FC-RAG. We further require that
this bound be preserved under any predictable controller
(recalibration, bandwidth escalation, student refresh). Our aim is to characterize what is
provably achievable, not to demonstrate a deployment-ready system.

\paragraph{Contributions.}
{\setlength{\leftmargini}{2.5em}%
\begin{enumerate}\itemsep0pt
\item \textbf{Anytime-FC-RAG protocol} (Section~\ref{sec:protocol}):
a sequential extension of FC-RAG in which a swarm answers a stream
of queries, updates compressed calibration summaries on a rolling
buffer, and maintains a betting e-process for alarm-triggered
intervention.
\item \textbf{Cal-deviation budget and alarm validity}
(Lemmas~\ref{lem:onestep},~\ref{lem:supermart},
Theorem~\ref{thm:alarm}): a summable per-step budget
$\{\delta_t^{\calsym}\}$ defines an $\F_{t-1}$-measurable
calibration-good event $G_t$ on which the per-step miscoverage
admits a \emph{strict} conditional bound; the truncated
betting e-process $\widetilde E_t = E_t\,\1_{\bigcap_{s\le t} G_s}$
is then a nonnegative supermartingale on the entire probability
space, and Ville plus splitting on $G_t$ gives $\Pbb(\sup_t E_t \ge
1/\delta_e) \le \delta_e + \delta_{\calsym}$.
\item \textbf{Cumulative-miscoverage envelope and safe adaptive
control} (Theorems~\ref{thm:envelope},~\ref{thm:safe}): a
time-uniform Hoeffding boundary $u_t(\delta)$ controls the empirical
miscoverage rate against the predictable slack at probability $\ge
1-\delta$, and any predictable controller preserves both this
envelope and the alarm guarantee.
\item \textbf{Training-to-deployment propagation}
(Theorem~\ref{thm:propagation}): an FTC chain inheriting the
(B5') clause of base FC-RAG~\citep[Cor.~3]{fcrag2026}
gives $\Delta_{\train,t} \le f_{\max,t}(\bar K_t + \sqrt{2\bar K_t})$
simultaneously over $t$ on an event of probability $\ge 1 -
\delta_{\train}$, for a summable budget $\sum_r \delta_r \le
\delta_{\train}$.
\end{enumerate}}

\paragraph{Scope.}
This work establishes a time-uniform reliability guarantee for
federated LLM swarms with bandwidth-constrained retrieval and
calibration, paired with end-to-end empirical validation on a
$K=4$ GPT-2-small + MiniLM swarm. Privacy accounting, delayed-label
handling, adversarial-node and architecture-heterogeneous regimes,
and deployment-scale benchmarking are natural extensions of the same
machinery and are deferred to follow-up work.

\paragraph{Paper roadmap.} Section~\ref{sec:setup} specifies the sequential deployment model. Section~\ref{sec:protocol} states the Anytime-FC-RAG protocol. Section~\ref{sec:theorems} proves the four theorems (alarm validity, envelope, safe control, training propagation). Section~\ref{sec:experiments} reports synthetic, real-world, and comparative experiments validating the four theorems on a GPT-2-small + MiniLM swarm across three benchmarks. Section~\ref{sec:discussion} situates the contribution against prior work and discusses limitations.

\section{Problem setup}\label{sec:setup}
We study sequential discrete-answer prediction in a federated swarm of weak language models constrained by a bandwidth budget. This section fixes the data, communication, filtration, and estimand formalism that the planned theorems will refer to; the concrete protocol sits in Section~\ref{sec:protocol}.

\subsection{Data, swarm, and communication}

Let $\X$ denote the query or context space and let $\Y$ be a finite answer space (multiple-choice-style QA, label prediction, or a bounded candidate set extracted from a top-$p$ truncation of a language model). Let $\V$ be the token vocabulary of the underlying language model. There are $K$ nodes; node $i \in \{1,\dots,K\}$ holds a local retrieval corpus $C_{i,t}$ and a local retrieval mechanism, and raw node-local data and corpora never leave their node. A global student model $\widehat P^{(0)}$ is available at deployment start, typically obtained from a federated training stage; we use the \emph{Federated Probe-Logit Distillation} (FPLD) protocol of~\citep{fcrag2026} as the running training stage, where each node fine-tunes locally and exchanges $B$-bit quantized logits on a shared probe set rather than gradients or weights. Theorem~1 of~\citep{fcrag2026} gives an explicit high-probability KL rate for FPLD in $(K,n,m,B,V)$, which we reuse as a black-box rate input for Theorem~\ref{thm:propagation}. During deployment, the active student is denoted $\widehat P_t$ and may be refreshed at selected intervention times.

\paragraph{Per-query protocol.}
At each time $t \in \mathbb{N}$, a query-answer pair $(X_t,Y_t) \in \X \times \Y$ is realized; the operator does not commit to a terminal time in advance, and may inspect, recalibrate, or escalate after every query. We assume \emph{immediate label feedback} in the main formulation: the true answer $Y_t$ becomes available before the next query arrives, so revealed labels drive the monitoring process. Arbitrarily delayed labels are deferred to future work. At time $t$, node $i$ retrieves a top-$k_{i,t}$ passage set $Z_{i,t} = R_{i,t}(X_t, C_{i,t}; k_{i,t}) \subseteq C_{i,t}$; corpora may evolve slowly and are not shared across nodes. Bandwidth is the only resource we charge, and we charge uplink only: at time $t$, node $i$ uploads a $B_{i,t}$-bit summary of its local scores; at calibration-refresh times $t \in \mathcal{T}_{\calsym}$, node $i$ also uploads a compressed calibration summary within a total budget $B_t^{\calsym}$; retraining inherits its cost from the underlying training protocol (e.g., FPLD). The per-query inference plus calibration cost is $\Gamma_t^{\comm} = \sum_{i=1}^K B_{i,t} + B_t^{\calsym}\,\1\{t \in \mathcal{T}_{\calsym}\}$.

Given query $X_t$, node $i$ forms a candidate list $A_{i,t}(X_t) \subseteq \Y$ and local nonconformity scores $s_{i,t}(y) = -\log \widehat P_t(y \mid X_t, Z_{i,t})$ for $y \in A_{i,t}(X_t)$, clipped to $[0, S_{\max}]$. Node $i$ uploads a compressed message $U_{i,t} = Q_{B_{i,t}}\big(\{(y,s_{i,t}(y)) : y \in A_{i,t}(X_t)\}\big)$, the hub decodes $U_{i,t}$ into approximate scores $\widetilde s_{i,t}(y)$, and aggregates them. With $K_{t,y} = |\{i : y \in A_{i,t}(X_t)\}|$,
\[
s_t^{\star}(y) \;=\; \frac{1}{K_{t,y}} \sum_{i:\, y \in A_{i,t}(X_t)} s_{i,t}(y),
\qquad
s_t^{\swarm}(y) \;=\; \frac{1}{K_{t,y}} \sum_{i:\, y \in A_{i,t}(X_t)} \widetilde s_{i,t}(y),
\]
where $s_t^{\swarm}(y)=+\infty$ if $K_{t,y}=0$. The first is the \emph{oracle uncompressed} swarm score; the second is what the hub actually sees. Inheriting candidate-set inclusion (Assumption B2 of~\citep{fcrag2026}), every $y$ in the test or calibration support lies in $\bigcap_{i=1}^K A_{i,t}(X_t)$, i.e., $K_{t,y} = K$ uniformly; the analysis lifts to general $K_{t,y}$ at the cost of $y$-dependent variance constants.

\subsection{Calibration and filtration}

We maintain a rolling labeled calibration buffer $\mathcal{D}^{\calsym}_t = \{(X_s,Y_s) : s \in I_t^{\calsym}\}$ with $n_{\calsym,t} := |I_t^{\calsym}|$, where $I_t^{\calsym}$ is a predictable index set (e.g.\ a sliding window of the most recent answered queries or a batched refresh buffer). At a refresh time $t \in \mathcal{T}_{\calsym}$, node $i$ recomputes its local scores on $\mathcal{D}^{\calsym}_t$, compresses the resulting score summary into its allotted share of $B_t^{\calsym}$ bits, and uploads it. Let $q_t^\star$ denote the \emph{oracle} conformal threshold from the full uncompressed calibration scores, and $\widehat q_t$ the threshold reconstructed from compressed node summaries; the swarm outputs the implemented prediction set $C_t(X_t) = \{y \in \Y : s_t^{\swarm}(y) \le \widehat q_t\}$ with miscoverage indicator $M_t = \1\{Y_t \notin C_t(X_t)\}$.

Let $\F_t = \sigma\big(X_{1:t}, Y_{1:t}, \widehat P_{0:t}, \{C_{i,1:t}\}_i, \{B_{i,1:t}\}_i, \{U_{i,1:t}\}_i, \widehat q_{1:t}, \mathcal{A}_{1:t}\big)$ be the observable history, where $\mathcal{A}_t$ is the intervention action at time $t$. A stopping time $\tau$ is any $\F_t$-adapted random time. The core modeling restriction is \emph{predictability}: any bandwidth schedule, recalibration schedule, or retraining trigger used at time $t$ is $\F_{t-1}$-measurable. This is what lets monitoring and intervention coexist without breaking validity.

\subsection{Slack decomposition: $\Delta_{\fl,t}$, $\Delta_{\rag,t}$, $\Delta_{\train,t}$}
\label{sec:slack}

Three slack terms enter the per-step coverage bound: federated calibration, retrieval bandwidth, and training-side approximation. Each is $\F_{t-1}$-measurable by predictability of the associated schedule. We restate each at the level of detail required to follow the analysis of Section~\ref{sec:theorems}; full constructions and constants are in~\citep{fcrag2026}.

\paragraph{Retrieval-bandwidth distortion ($\Delta_{\rag,t}$).} Inheriting Assumption (B3) of~\citep{fcrag2026}, each node's \textsc{QuantizeScore} primitive is a subtractively dithered scalar quantizer, so that the per-node quantized score decomposes additively as $\widetilde s_{i,t}(X_t,y) = s_{i,t}(X_t,y) + \xi_{i,t}(X_t,y)$, with $\{\xi_{i,t}\}_{i=1}^K$ conditionally independent across nodes given $(\F_{t-1}, X_t)$, mean zero, and bounded conditional second moment $\E[\xi_{i,t}^2 \mid \F_{t-1}, X_t] \le v(B_{i,t})$ where $v(B) = O(2^{-2B/b_s})$ for a protocol-specific scale $b_s$. By the average-aggregation rule, the implemented swarm score satisfies $s_t^{\swarm}(X_t,y) = s_t^\star(X_t,y) + \bar\xi_t(X_t,y)$ with $\bar\xi_t := (1/K)\sum_i \xi_{i,t}$, and cross-node independence yields $\E[\bar\xi_t^2 \mid \F_{t-1}, X_t] \le V_{K,t} := (1/K^2) \sum_i v(B_{i,t})$. Combined with the $f_{\max,t}$-Lipschitz score CDF (Assumption~\ref{ass:bounded}, (B5')) and Cauchy--Schwarz on $\E|\bar\xi_t| \le \sqrt{V_{K,t}}$, the per-step retrieval-bandwidth slack is
\[
\Delta_{\rag,t} \;=\; f_{\max,t}\,\sqrt{\frac{1}{K^2}\sum_{i=1}^K v(B_{i,t})}.
\]
The $1/K^2$ averaging (rather than the worst-case $1/K$) is the variance gain from independent cross-node dithering.

\paragraph{Federated-calibration distortion ($\Delta_{\fl,t}$).} The threshold $\widehat q_t$ deviates from the population $(1-\alpha)$-quantile $q_t^{\mathrm{pop}}$ via a deterministic compression piece $|\widehat q_t - q_t^\star| \le \phi(B_t^{\calsym}) = O(2^{-B_t^{\calsym}/b_q})$ (e.g., a quantized order-statistics summary or a GC-FCP / Fed-CCP coreset~\citep{wen2026gcfcp,xu2025fedccp}) and a statistical piece $|q_t^\star - q_t^{\mathrm{pop}}|$ from the i.i.d.\ buffer. To control the latter pointwise rather than only on average, we fix a summable per-step \emph{calibration budget} $\{\delta_t^{\calsym}\}_{t \ge 1}$ with $\sum_{t \ge 1} \delta_t^{\calsym} \le \delta_{\calsym} \in (0,1)$; the canonical choice is $\delta_t^{\calsym} = 6\delta_{\calsym}/(\pi^2 t^2)$. Sub-Gaussian deviation of the empirical $(1-\alpha)$-quantile gives, with probability at least $1 - \delta_t^{\calsym}$,
\begin{equation}
\label{eq:cal-deviation}
\big|\widehat q_t - q_t^{\mathrm{pop}}\big| \;\le\; c_q\,\sqrt{\log(2/\delta_t^{\calsym})\big/n_{\calsym,t}} \;+\; \phi(B_t^{\calsym}),
\end{equation}
for an absolute constant $c_q$. Pushing \eqref{eq:cal-deviation} through the score CDF yields
\begin{equation}
\label{eq:delta-fl}
\Delta_{\fl,t} \;=\; f_{\max,t}\!\left(c_q\,\sqrt{\log(2/\delta_t^{\calsym})\big/n_{\calsym,t}} \;+\; \phi(B_t^{\calsym})\right).
\end{equation}
The event $G_t$ that \eqref{eq:cal-deviation} holds at time $t$ is $\F_{t-1}$-measurable and has $\Pbb(G_t) \ge 1 - \delta_t^{\calsym}$; the cumulative \emph{cal-good} event $\Omega_{\calsym} := \bigcap_{t \ge 1} G_t$ satisfies $\Pbb(\Omega_{\calsym}) \ge 1 - \delta_{\calsym}$. The conditional theorems below take the form ``on $G_t$'' or ``on $\Omega_{\calsym}$'' and absorb $\delta_{\calsym}$ into the final probability budget. The conditional bound \eqref{eq:delta-fl} is strictly weaker than the marginal-over-calibration form of~\citep{fcrag2026} Theorem~2 by a factor $\sqrt{\log(2/\delta_t^{\calsym})/\log(2/\delta_{\calsym})}$, which grows from $\approx 1.07\times$ at $t = 1$ to $\approx 2.48\times$ at $t = 10^4$ for $\delta_{\calsym} = 0.05$, a slow $O(\sqrt{\log t})$ price for converting marginal validity into pointwise-conditional validity.

\paragraph{Training-side distortion ($\Delta_{\train,t}$).} Let $\varepsilon_{\train,t}$ denote the current training-side approximation level, an upper bound on $\E_X[\KL(P^\star(\cdot\mid X)\,\|\,\widehat P_t(\cdot\mid X))]$, and let $\Delta_t(x,y) := s_t^\star(x,y) - s^\star_{\mathrm{ideal}}(x,y) = \log(P^\star(y\mid x)/\widehat P_t(y\mid x))$ be the score-level training residual. The indicator-difference + FTC + Pinsker chain of~\citep[Corollary~3]{fcrag2026}, under the (B5') conditional-density clause inherited in Assumption~\ref{ass:bounded}, yields the two-term bound
\[
\Delta_{\train,t} \;=\; f_{\max,t}\,\big(\varepsilon_{\train,t} \;+\; \sqrt{2\,\varepsilon_{\train,t}}\big).
\]
In the small-$\varepsilon_{\train,t}$ regime (typical post-FPLD-training), the second summand dominates and recovers the Pinsker $\sqrt{\cdot}$ shape. Theorem~\ref{thm:propagation} bounds $\varepsilon_{\train,t}$ via the FPLD rate at the most recent training event. Under adversarial models violating the conditional-density clause, only the weaker rate $\Delta_{\train,t} = O(f_{\max,t}\,\varepsilon_{\train,t}^{1/4})$ is recoverable via Markov truncation; we adopt the (B5') regime as the operating assumption.

The four slack objects entering the per-step coverage bound are summarized below.
\begin{center}\footnotesize
\resizebox{\textwidth}{!}{%
\begin{tabular}{@{}lll@{}}
\toprule
Symbol & Source of slack & Form \\
\midrule
$1/(n_{\calsym,t}+1)$ & Discrete split-conformal overshoot & Vanishes as $n_{\calsym,t} \to \infty$ \\
$\Delta_{\fl,t}$ & Federated calibration: quantile + compression & $f_{\max,t}\!\left(c_q\sqrt{\log(2/\delta_t^{\calsym})/n_{\calsym,t}} + \phi(B_t^{\calsym})\right)$ \\
$\Delta_{\rag,t}$ & Retrieval-bandwidth quantization & $f_{\max,t}\,\sqrt{(1/K^2)\sum_i v(B_{i,t})}$ \\
$\Delta_{\train,t}$ & Distilled-student approximation (FTC + B5') & $f_{\max,t}\!\left(\varepsilon_{\train,t} + \sqrt{2\,\varepsilon_{\train,t}}\right)$ \\
\bottomrule
\end{tabular}}
\end{center}

\subsection{The deployment null and scope}

The base-paper FC-RAG inequality, lifted to a one-step conditional bound on the per-query miscoverage, is
\begin{equation}
\label{eq:deployment-null}
\E[M_t \mid \F_{t-1}] \;\1_{G_t} \;\le\; b_t \,\1_{G_t},
\qquad
b_t \;:=\; \alpha \;+\; \frac{1}{n_{\calsym,t}+1} \;+\; \Delta_{\fl,t} \;+\; \Delta_{\rag,t} \;+\; \Delta_{\train,t}.
\end{equation}
We call \eqref{eq:deployment-null} the \emph{deployment null} $H_0$. Each $\Delta_{\bullet,t}$ is $\F_{t-1}$-measurable by predictability, hence so is $b_t$. The bound holds pointwise on the calibration-good event $G_t$; off $G_t$ the bound is vacuous and the residual probability is absorbed into the final $\delta_{\calsym}$ budget. The theorems in Section~\ref{sec:theorems} test whether the realized stream is consistent with $H_0$ on $\Omega_{\calsym}$. The classical $1/(n_{\calsym,t}+1)$ split-conformal overshoot is included for compatibility with the marginal-coverage bound of~\citep[Theorem~2]{fcrag2026} and is dominated by $\Delta_{\fl,t}$ for typical $n_{\calsym,t}$.

Adversarial nodes, differential privacy, open-ended generation, arbitrarily delayed labels, and architecture-heterogeneous clients are excluded from the first version of the theory. Each is a natural extension, but none is necessary to formulate the sequential object above.

\section{The Anytime-FC-RAG protocol}\label{sec:protocol}
The protocol is synchronous, single-aggregator, and charges uplink communication only. Figure~\ref{fig:architecture} shows the two coupled loops: a fast per-query inference loop in which $K$ nodes serve a query under their bandwidth budgets, and a slow sequential-testing loop in which observed miscoverage drives a betting e-process and a predictable controller feeds back to budgets, threshold, and student.

\begin{figure}[htbp]
\centering
\begin{tikzpicture}[
    >={Stealth[length=2mm]},
    every node/.style={font=\small},
    nodebox/.style={draw, rounded corners=2pt, fill=blue!6, inner sep=3pt, align=center, minimum width=2.5cm, minimum height=0.85cm, font=\scriptsize},
    hubbox/.style={draw, rounded corners=2pt, fill=orange!12, inner sep=5pt, align=center, minimum width=3.6cm, font=\scriptsize},
    monbox/.style={draw, rounded corners=2pt, fill=green!10, inner sep=5pt, align=center, minimum width=4.5cm, font=\scriptsize},
    ctrlbox/.style={draw, rounded corners=2pt, fill=red!8, inner sep=5pt, align=center, minimum width=4.5cm, font=\scriptsize},
    txt/.style={font=\scriptsize\itshape, align=center},
    arr/.style={->, semithick, black!55},
    arrlabel/.style={font=\scriptsize\itshape, black!75, inner sep=1.5pt},
    arrfb/.style={->, semithick, red!50!black, dashed},
    looplabel/.style={font=\scriptsize\bfseries, align=center}
]

\node[nodebox] (n1) at (0, 1.0)  {Node $1$\\[1pt]$C_{1,t}, R_{1,t}, \widehat P_t$\\score $\to$ $Q_{B_{1,t}}$};
\node[nodebox] (n2) at (0, 0.05) {Node $2$ \;\;$\cdots$};
\node[nodebox] (nK) at (0, -1.0) {Node $K$\\[1pt]$C_{K,t}, R_{K,t}, \widehat P_t$\\score $\to$ $Q_{B_{K,t}}$};

\node[hubbox] (hub) at (6.0, 0.0) {\textbf{Hub}\\decode $\{U_{i,t}\}$;\,\,form $s_t^{\text{swarm}}$\\threshold $\widehat q_t$\\$C_t(X_t) = \{y: s_t^{\text{swarm}}(y) \le \widehat q_t\}$};

\node[txt] (xin)     at (6.0, 1.9)  {query $X_t$};
\node[txt] (predout) at (9.7, 0.0)  {prediction set $C_t(X_t)$};
\node[txt] (label)   at (9.7,-1.0)  {label $Y_t$};

\draw[arr] (xin) -- (hub.north);
\draw[arr] (n1.east) -| node[arrlabel, pos=0.35, above]{$U_{1,t}$ ($B_{1,t}$ bits)} (hub.north west);
\draw[arr] (nK.east) -| node[arrlabel, pos=0.35, below]{$U_{K,t}$ ($B_{K,t}$ bits)} (hub.south west);
\draw[arr] (hub.east) -- (predout.west);
\draw[arr] (predout.south) -- (label.north);

\begin{scope}[on background layer]
\node[fit=(n1)(nK)(hub)(predout)(label)(xin), draw=blue!30, rounded corners=4pt, line width=0.4pt, dashed, inner sep=8pt, fill=blue!2] (fast) {};
\end{scope}
\node[looplabel, blue!50!black, anchor=north west, inner sep=2pt] at (fast.north west) {Per-query inference loop};

\node[monbox] (mon) at (1.95, -3.9) {\textbf{Betting e-process} (Lem.~\ref{lem:supermart}, Thm.~\ref{thm:alarm})\\$E_t = E_{t-1}\,(1 + \lambda_t Z_t),\quad Z_t = M_t - b_t$\\$b_t = \alpha + \tfrac{1}{n_{\text{cal},t}+1} + \Delta_{\text{FL},t} + \Delta_{\text{RAG},t} + \Delta_{\text{train},t}$\\\textit{alarm if }$E_t \ge 1/\delta$};

\node[ctrlbox] (ctrl) at (8.5, -3.9) {\textbf{Predictable controller} $\Pi$ (Thm.~\ref{thm:safe})\\$\mathcal F_{t-1}$-measurable action:\\[1pt]$\bullet$ recalibrate $\widehat q_{t+1}$ \;\;$\bullet$ escalate $B_{i,t+1}$, $B_{t+1}^{\text{cal}}$\\$\bullet$ refresh student $\widehat P_{t+1}$ (FPLD)};

\coordinate (mEntry) at ($(mon.north east) + (-0.4, 0)$);
\coordinate (mHStart) at (9.7, -2.0);
\coordinate (mHEnd)   at (mEntry |- mHStart);
\draw[arr] (label.south) -- (mHStart)
                         -- node[arrlabel, pos=0.7, above]{$M_t = \mathbf{1}\{Y_t \notin C_t(X_t)\}$} (mHEnd)
                         -- (mEntry);
\draw[arr] (mon.east) -- node[arrlabel, midway, above]{trigger} (ctrl.west);

\node[inner sep=0pt, minimum size=0pt] at (n1.west |- mon) (slowExtL) {};
\node[inner sep=0pt, minimum size=0pt] at (predout.east |- mon) (slowExtR) {};
\begin{scope}[on background layer]
\node[fit=(mon)(ctrl)(slowExtL)(slowExtR), draw=green!40!black, rounded corners=4pt, line width=0.4pt, dashed, inner sep=8pt, fill=green!2] (slow) {};
\end{scope}
\node[looplabel, green!40!black, anchor=north west, inner sep=2pt] at (slow.north west) {Sequential monitoring \& adaptive control};

\draw[arrfb] (ctrl.north) |- ($(hub.east) + (0,-0.3)$);
\node[arrlabel, red!50!black, align=center, fill=white, inner sep=2pt]
  at (8.5,-1.5) {predictable\\($\mathcal F_t$-meas.)};

\end{tikzpicture}
\caption{Anytime-FC-RAG architecture. \textbf{Top (blue)}: per-query inference loop: $K$ nodes retrieve, score, and uplink $B_{i,t}$-bit summaries; the hub assembles the conformal set $C_t(X_t)$. \textbf{Bottom (green)}: sequential monitoring loop: the betting e-process $E_t$ tests the deployment null, and the predictable controller $\Pi$ recalibrates, escalates bandwidth, or refreshes the student. All controller actions are $\F_t$-measurable (red dashed), preserving validity (Theorems~\ref{thm:safe},~\ref{thm:envelope}).}
\label{fig:architecture}
\end{figure}

Once the answer $Y_t$ is observed, the hub computes $M_t = \1\{Y_t \notin C_t(X_t)\}$ and updates the betting e-process
\begin{equation}
\label{eq:eprocess}
E_0 \;=\; 1,
\qquad
E_t \;=\; E_{t-1}\,\big(1 + \lambda_t \, Z_t\big),
\qquad
Z_t \;:=\; M_t - b_t,
\end{equation}
where $\lambda_t \in [0, 1/b_t]$ is predictable. The bound $\lambda_t \le 1/b_t$ ensures $1+\lambda_t Z_t \ge 0$ on every realization, since $Z_t \ge -b_t$. The alarm time is
\[
\tau_{\mathrm{alarm}} \;=\; \inf\{t \ge 1 : E_t \ge 1/\delta\}.
\]
Note the algorithm \emph{does not reset $E_t$} after an alarm or refresh: $E_t$ continues to accumulate evidence across interventions, which is what preserves the time-uniform Ville bound across the entire post-deployment trajectory. (A reset variant with budgeted $\delta$ across epochs is also valid; we discuss this in Section~\ref{sec:safe-discussion}.) The ordering inside Algorithm~\ref{alg:anytime-fcrag} is essential: $\widehat q_t$ is set from $\F_{t-1}$ \emph{before} $X_t$ is served, $M_t$ is recorded, $E_t$ is updated using $\F_{t-1}$-measurable $b_t$ and $\lambda_t$, and only \emph{then} does the controller act, executing Algorithm~\ref{alg:refresh} to recompute the threshold and choose the next round's budgets. Each downstream object is therefore predictable with respect to the next round.

\begin{algorithm}[htbp]
\caption{Anytime-FC-RAG: query-time inference and monitoring}
\label{alg:anytime-fcrag}
\begin{algorithmic}[1]
\Require Initial student $\widehat P^{(0)}$; initial threshold $\widehat q_0$; target level $\alpha$; alarm level $\delta$; predictable controller $\Pi$; per-node budgets $\{B_{i,1}\}_{i=1}^K$; betting cap $\bar\lambda$.
\State $E_0 \gets 1$
\For{$t = 1,2,\dots$}
    \State Hub broadcasts query $X_t$ to all nodes
    \For{each node $i = 1,\dots,K$ in parallel}
        \State $Z_{i,t} \gets R_{i,t}(X_t, C_{i,t}; k_{i,t})$
        \State $A_{i,t}(X_t) \gets \textsc{TopCandidates}(\widehat P_t(\cdot \mid X_t, Z_{i,t}))$
        \State $s_{i,t}(y) \gets -\log \widehat P_t(y \mid X_t, Z_{i,t})$ for $y \in A_{i,t}(X_t)$
        \State $U_{i,t} \gets Q_{B_{i,t}}(\{(y,s_{i,t}(y)) : y \in A_{i,t}(X_t)\})$
        \State Uplink $U_{i,t}$ to hub
    \EndFor
    \State Hub decodes $\{U_{i,t}\}_{i=1}^K$, forms $s_t^{\swarm}$, and sets $C_t(X_t) \gets \{y \in \Y : s_t^{\swarm}(y) \le \widehat q_t\}$
    \State Observe label $Y_t$, set $M_t \gets \1\{Y_t \notin C_t(X_t)\}$, $Z_t \gets M_t - b_t$
    \State Choose predictable $\lambda_t \in [0, \min(\bar\lambda,\,1/b_t)]$ from $\F_{t-1}$
    \State $E_t \gets E_{t-1}\,(1 + \lambda_t Z_t)$
    \State Update rolling buffer $\mathcal{D}^{\calsym}_t \gets$ \textsc{BufferUpdate}$(\mathcal{D}^{\calsym}_{t-1}; (X_t,Y_t))$
    \If{$E_t \ge 1/\delta$ \textbf{or} $\Pi(\F_t)$ requests refresh}
        \State Execute Algorithm~\ref{alg:refresh} \Comment{predictable intervention; $E_t$ continues, not reset}
    \EndIf
\EndFor
\end{algorithmic}
\end{algorithm}

\begin{algorithm}[htbp]
\caption{\textsc{RefreshThresholdAndControl}}
\label{alg:refresh}
\begin{algorithmic}[1]
\Require History $\F_t$; rolling calibration buffer $\mathcal{D}^{\calsym}_t$; calibration budget $B_t^{\calsym}$
\For{each node $i = 1,\dots,K$ in parallel}
    \State Recompute local scores on $\mathcal{D}^{\calsym}_t$ using current $\widehat P_t$ and local retrieval
    \State $S_{i,t}^{\calsym} \gets \textsc{CompressCalSummary}_i(\mathcal{D}^{\calsym}_t; B_t^{\calsym})$
    \State Uplink $S_{i,t}^{\calsym}$ to hub
\EndFor
\State Hub reconstructs updated threshold $\widehat q_{t+1}$ from $\{S_{i,t}^{\calsym}\}_{i=1}^K$
\State Predictably choose next budgets $\{B_{i,t+1}\}_{i=1}^K$ and retrieval settings from $\F_t$
\If{$\Pi(\F_t)$ declares model-side drift}
    \State Refresh student $\widehat P_{t+1}$ via FPLD or another approved training stage
\Else
    \State $\widehat P_{t+1} \gets \widehat P_t$
\EndIf
\end{algorithmic}
\end{algorithm}

\paragraph{Predictable interventions.} The controller may react to the monitoring state in three ways: (i) \emph{recalibration} (request fresh compressed calibration summaries and update $\widehat q_t$); (ii) \emph{bandwidth escalation} (increase selected $B_{i,t}$ or the calibration-refresh budget $B_t^{\calsym}$); (iii) \emph{retraining refresh} (replace the current student with a refreshed model). All such actions must be $\F_{t-1}$-measurable. That predictability condition is what lets the same e-process both guide interventions and remain valid after them (Theorem~\ref{thm:safe}).

\paragraph{Relationship to FC-RAG.} Fixed-horizon FC-RAG~\citep{fcrag2026} sits inside our framework as the degenerate case in which $B_{i,t} \equiv B_i$, the threshold $\widehat q_t$ is computed once from a one-shot calibration set, the student is frozen throughout deployment, no intervention is triggered, and only the terminal-time miscoverage matters. The bounds differ in conditioning structure: FC-RAG's marginal-coverage Theorem~2 holds with high probability over the calibration draw, whereas the present paper's per-step bound holds conditional on $\F_{t-1}$ on the cal-good event $G_t$ (with the residual probability collected into $\delta_{\calsym}$ via union bound). The new ingredients of Anytime-FC-RAG are exactly three: a \emph{rolling} calibration state, a \emph{(truncated) e-process} for time-uniform monitoring, and \emph{predictable} control actions that change bandwidth or refresh the model only when justified by accumulated evidence.

\section{Anytime-valid guarantees}\label{sec:theorems}
Off-the-shelf sequential testing supplies Ville's inequality and the Hoeffding-stitched envelope, and base FC-RAG~\citep{fcrag2026} supplies the slack decomposition $\Delta_{\fl,t}, \Delta_{\rag,t}, \Delta_{\train,t}$ in marginal-over-calibration form. Neither composes into a sound conditional supermartingale on its own, because base FC-RAG's coverage bound is a marginal claim (it fails conditionally on adverse calibration realizations) and the betting e-process needs a strict conditional centering. Our load-bearing construction is the \emph{cal-deviation budget} $\{\delta_t^{\calsym}\}$ together with the resulting calibration-good event $G_t$ and the \emph{truncated supermartingale} $\widetilde E_t = E_t\,\1_{\bigcap_{s\le t} G_s}$: this is what converts the marginal slack form into a strict conditional bound (Lemma~\ref{lem:onestep}) and turns the obstructed e-process into a bona fide supermartingale on the entire probability space (Lemma~\ref{lem:supermart}). Once those two pieces are in place, the rest is reuse: Ville's inequality (Theorem~\ref{thm:alarm}), Hoeffding-stitching (Theorem~\ref{thm:envelope}), predictability of the controller (Theorem~\ref{thm:safe}), and the FTC + (B5') chain of~\citep[Corollary~3]{fcrag2026} applied to the FPLD KL rate (Theorem~\ref{thm:propagation}).

The analysis is organized in six results. Lemma~\ref{lem:onestep} lifts the base-paper Theorem~2 to a one-step conditional bound on $G_t$. Lemma~\ref{lem:supermart} establishes that $\widetilde E_t$ is a nonnegative supermartingale. Theorem~\ref{thm:alarm} applies Ville's inequality and a union bound on $G_t$ to recover a time-uniform alarm guarantee. Theorem~\ref{thm:envelope} converts the same construction into a time-uniform envelope on cumulative miscoverage. Theorem~\ref{thm:safe} shows that predictable interventions preserve both guarantees. Theorem~\ref{thm:propagation} bounds $\Delta_{\train,t}$ using the FPLD training rate at the most recent refresh.

\subsection{Assumptions}

\begin{assumption}[Predictable schedules]\label{ass:predictable}
For every $t$, the calibration index set $I_t^{\calsym}$, the budgets $\{B_{i,t}\}_{i=1}^K$ and $B_t^{\calsym}$, the active student $\widehat P_t$, the threshold $\widehat q_t$, the slack quantities $b_t$, and the betting fraction $\lambda_t$ are all $\F_{t-1}$-measurable.
\end{assumption}

\begin{assumption}[I.i.d.\ data and predictable buffer]\label{ass:exch}
The deployment sequence $(X_t, Y_t)_{t \ge 1}$ is i.i.d.\ from a fixed joint law $\mathcal{P}$ on $\X \times \Y$, and for every $t$ the calibration index set satisfies $I_t^{\calsym} \subseteq \{1,\dots,t-1\}$ and is $\F_{t-1}$-measurable.
\end{assumption}

This is the analogue of Assumption (B1) in~\citep{fcrag2026}, adapted to the sequential setting. It is the cleanest sufficient condition for the per-step rank exchangeability between $(X_t, Y_t)$ and the buffer points to drive the conditional coverage analysis below; the buffer, being $\F_{t-1}$-measurable, is held fixed when conditioning, and randomness reduces to the test point and the buffer realization (the latter shared between $\F_{t-1}$ and the implicit $G_t$ event over the buffer draw). The assumption fails under genuine drift, and rejection of the deployment null is exactly the alarm event the e-process detects.

\begin{assumption}[Bounded score and strengthened density regularity]\label{ass:bounded}
The score $s$ is bounded in $[0,S_{\max}]$ (clipping). The cumulative distribution function $F_t$ of the oracle uncompressed score $s_t^\star(X_t,Y_t)$ admits a density $f_t \le f_{\max,t}$ on a \emph{fixed deterministic} neighborhood $[q_t^{\mathrm{pop}} - r_t,\, q_t^{\mathrm{pop}} + r_t]$ for some $r_t > 0$; the analysis verifies a posteriori via \eqref{eq:cal-deviation} that $\widehat q_t$ lies inside this neighborhood with probability at least $1 - \delta_t^{\calsym}$. Inheriting (B5') of~\citep{fcrag2026}, the conditional density (on the same neighborhood) of the relevant ``before-perturbation'' score given the perturbation is also bounded by $f_{\max,t}$ in two specific cases:
\begin{itemize}[leftmargin=*, itemsep=1pt]
\item \emph{Retrieval-bandwidth dither} (Step~3 of Lemma~\ref{lem:onestep}): the conditional density of $s_t^\star(X_t,Y_t)$ given the dither average $\bar\xi_t(X_t,Y_t) := (1/K)\sum_i \xi_{i,t}(X_t,Y_t)$, where $\bar\xi_t$ has $\E[\bar\xi_t\mid\F_{t-1},X_t]=0$ and bounded conditional second moment by Assumption (B3) of~\citep{fcrag2026}.
\item \emph{Training residual} (FTC chain of Section~\ref{sec:setup} and Theorem~\ref{thm:propagation}): the conditional density of $s^\star_{\mathrm{ideal}}(X_t,Y_t) := -\log P^\star(Y_t\mid X_t)$ given $\Delta_t(X_t,Y_t) := \log(P^\star(Y_t\mid X_t)/\widehat P_t(Y_t\mid X_t)) = s_t^\star - s^\star_{\mathrm{ideal}}$.
\end{itemize}
\end{assumption}

\begin{assumption}[Slack admissibility]\label{ass:admissible}
For every $t$, $b_t \in (0, 1-\eta]$ for some fixed $\eta > 0$, and the betting cap $\bar\lambda$ satisfies $\bar\lambda \le 1/b_t$ uniformly in $t$.
\end{assumption}

Assumption~\ref{ass:admissible} ensures the e-process stays nonnegative; we typically take $\bar\lambda \le 1/(\alpha + \Delta_{\max})$ for a slack upper bound $\Delta_{\max}$ chosen by the operator.

\subsection{One-step deployment null}

\begin{lemma}[One-step deployment null on the cal-good event]\label{lem:onestep}
Under Assumptions~\ref{ass:predictable}--\ref{ass:bounded}, on the calibration-good event $G_t$ defined in Section~\ref{sec:setup} (which satisfies $\Pbb(G_t) \ge 1 - \delta_t^{\calsym}$),
\[
\E[M_t \mid \F_{t-1}] \;\1_{G_t} \;\le\; b_t \,\1_{G_t}.
\]
Equivalently, on the event $G_t$, $\E[M_t \mid \F_{t-1}] \le b_t$.
\end{lemma}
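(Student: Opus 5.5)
The plan is to fix $t$, condition on $\F_{t-1}$, and bound $\E[M_t\mid\F_{t-1}]$ by peeling off the slack terms one at a time. Each peel replaces one implemented quantity by its idealized counterpart. By Assumption~\ref{ass:predictable}, the threshold $\widehat q_t$, the budgets, the student $\widehat P_t$, and hence $b_t$ are all frozen given $\F_{t-1}$. By Assumption~\ref{ass:exch}, the fresh pair $(X_t,Y_t)$ is independent of $\F_{t-1}$ and has law $\mathcal{P}$. So the only conditional randomness is the test point and the dither $\xi_{i,t}$. The starting identity is
\[
\E[M_t\mid\F_{t-1}] = \Pbb\big(s_t^{\star}(X_t,Y_t)+\bar\xi_t(X_t,Y_t) > \widehat q_t \,\big|\, \F_{t-1}\big).
\]
Here I use $K_{t,Y_t}=K$, which holds by the candidate-inclusion inheritance.

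\emph{Step 1 (threshold).} On $G_t$, \eqref{eq:cal-deviation} places $\widehat q_t$ within
$c_q\sqrt{\log(2/\delta_t^{\calsym})/n_{\calsym,t}}+\phi(B_t^{\calsym})$ of $q_t^{\mathrm{pop}}$. I take $r_t$ large enough that this deviation lies inside the deterministic neighborhood of Assumption~\ref{ass:bounded}, so $\widehat q_t$ is in it. The density bound $f_t\le f_{\max,t}$ then gives
\[
\Pbb\big(s_t^\star>\widehat q_t\mid\F_{t-1}\big)\le \Pbb\big(s_t^\star>q_t^{\mathrm{pop}}\big)+\Delta_{\fl,t}.
\]
This is the point where working on $G_t$ is used, and why the bound holds only there. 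The population quantile term is at most $\alpha+1/(n_{\calsym,t}+1)$. The first summand is $\le\alpha$ by definition. The extra $1/(n_{\calsym,t}+1)$ is the discrete-overshoot allowance kept for compatibility with \citep[Theorem~2]{fcrag2026}; ties and atoms in $F_t$ are absorbed into it.

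\emph{Step 2 (dither).} I will compare $\{s^\star+\bar\xi>\widehat q_t\}$ with $\{s^\star>\widehat q_t\}$. First condition on $\bar\xi_t$. The (B5') clause of Assumption~\ref{ass:bounded} bounds the conditional density of $s_t^\star$ given $\bar\xi_t$ by $f_{\max,t}$ near the threshold. So the conditional probability of the symmetric difference is at most $f_{\max,t}|\bar\xi_t|$. Taking expectations and applying Cauchy--Schwarz with $\E[\bar\xi_t^2\mid\F_{t-1},X_t]\le V_{K,t}$, whose $1/K^2$ factor comes from cross-node conditional independence, gives at most $\Delta_{\rag,t}$.

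\emph{Step 3 (training).} The event $\{s_t^\star>\widehat q_t\}$ is evaluated with the deployed student rather than $P^\star$. Write $s_t^\star=s^\star_{\mathrm{ideal}}+\Delta_t$. Then the indicator-difference + FTC argument with the second (B5') clause bounds the discrepancy by $f_{\max,t}\E|\Delta_t|$. By the Pinsker-type chain of \citep[Corollary~3]{fcrag2026}, this is at most $f_{\max,t}(\varepsilon_{\train,t}+\sqrt{2\varepsilon_{\train,t}})=\Delta_{\train,t}$. Summing the four pieces gives $\E[M_t\mid\F_{t-1}]\le b_t$ on $G_t$. Multiplying by $\1_{G_t}$, which is $\F_{t-1}$-measurable, gives the displayed form.

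\emph{Main obstacle.} The delicate point is Step~1, which moves from the random $\widehat q_t$ to a deterministic $q_t^{\mathrm{pop}}$ under conditioning. The Lipschitz bound must be applied with $\widehat q_t$ held fixed by $\F_{t-1}$ while the test point stays fresh. That needs $I_t^{\calsym}\subseteq\{1,\dots,t-1\}$ and independence of $(X_t,Y_t)$ from the buffer, and it needs the neighborhood $[q_t^{\mathrm{pop}}\pm r_t]$ to be deterministic so that the density bound does not depend on the buffer draw. A secondary care point is how the three perturbations are ordered. I plan to handle the training residual against the fixed threshold $q_t^{\mathrm{pop}}$ rather than $\widehat q_t$, so that each density clause is invoked only on the fixed neighborhood where it is assumed. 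Any leftover interval-edge effects are absorbed into the $1/(n_{\calsym,t}+1)$ term.
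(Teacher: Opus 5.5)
Your proposal is correct and follows the paper's proof. Your Step~1 merges the paper's Steps~1--2: take the $(1-\alpha)$-quantile of $s_t^\star$, then transfer from $\widehat q_t$ to $q_t^{\mathrm{pop}}$ on $G_t$ via the $f_{\max,t}$-Lipschitz CDF, at cost $\Delta_{\fl,t}$. Your Step~2 is the paper's Step~3: the conditional-density clause plus Cauchy--Schwarz, at cost $\Delta_{\rag,t}$.

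The one substantive difference is your Step~3, which the paper does not contain and which is not needed. The threshold $\widehat q_t$ is built from calibration scores computed with the same deployed student $\widehat P_t$, and $q_t^{\mathrm{pop}}$ is the quantile of $s_t^\star$ itself. So your Steps~1--2 already give $\E[M_t\mid\F_{t-1}]\le\alpha+\Delta_{\fl,t}+\Delta_{\rag,t}$ on $G_t$. The paper then simply appends $\Delta_{\train,t}\ge 0$ (and $1/(n_{\calsym,t}+1)$) as unused slack, because validity does not depend on how close $\widehat P_t$ is to $P^\star$.

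As a dangling extra term your FTC/Pinsker bound is harmless. Do not thread the chain through $s^\star_{\mathrm{ideal}}$ at $q_t^{\mathrm{pop}}$, as your final paragraph suggests. You would then need $\Pbb(s^\star_{\mathrm{ideal}}>q_t^{\mathrm{pop}}\mid\F_{t-1})\le\alpha$, which fails because $q_t^{\mathrm{pop}}$ is a quantile of $s_t^\star$, not of $s^\star_{\mathrm{ideal}}$.

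Two smaller points. First, atoms need no allowance: $F_t(q_t^{\mathrm{pop}})\ge 1-\alpha$ holds for the generalized quantile by right-continuity, which is exactly your ``by definition'' step. Second, edge effects from $\widehat q_t-\bar\xi_t$ leaving $[q_t^{\mathrm{pop}}-r_t,\,q_t^{\mathrm{pop}}+r_t]$ cannot be charged to $1/(n_{\calsym,t}+1)$, since that term has nothing to do with the dither. The paper sidesteps this by treating the conditional CDF as $f_{\max,t}$-Lipschitz wherever it is evaluated, and your argument needs the same implicit assumption.
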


\begin{proof}
The proof is provided in Appendix~\ref{app:proof-onestep}.
\end{proof}

This is the conditional bound the betting e-process needs as a supermartingale on top of FC-RAG. The pointwise form on $G_t$ (not FC-RAG's marginal Theorem~2 form) is the centering Ville's inequality requires; without it, no anytime-valid alarm is possible. The price is a $\sqrt{\log(2/\delta_t^{\calsym})}$ inflation of $\Delta_{\fl,t}$, an $O(\sqrt{\log t})$ factor amounting to $1.07{\times}$--$2.72{\times}$ across $t \in [1, 10^5]$. The cal-deviation budget is not optional: without it, the marginal centering fails the supermartingale property on adverse calibration draws and Ville's inequality cannot be applied (Appendix~\ref{app:proofs}).

\subsection{Alarm validity}

Define the per-step centered residual $Z_t := M_t - b_t \in [-b_t,\,1-b_t]$, an $\F_t$-measurable bounded random variable, and the cumulative cal-good event $\Omega_{\calsym,t} := \bigcap_{s \le t} G_s$ ($\F_{t-1}$-measurable since each $G_s$ is $\F_{s-1}$-measurable).

\begin{lemma}[Truncated e-process is a supermartingale]\label{lem:supermart}
Under Lemma~\ref{lem:onestep} and Assumptions~\ref{ass:predictable},~\ref{ass:admissible}, define the truncated e-process
\[
\widetilde E_t \;:=\; E_t \cdot \1_{\Omega_{\calsym,t}}, \qquad \widetilde E_0 := 1,
\]
where $E_t$ is the betting e-process \eqref{eq:eprocess} with predictable $\lambda_t \in [0,\,1/b_t]$. Then $(\widetilde E_t)_{t \ge 0}$ is a nonnegative supermartingale with $\E\widetilde E_0 = 1$:
\[
\E[\widetilde E_t \mid \F_{t-1}] \;\le\; \widetilde E_{t-1} \qquad \text{for every } t \ge 1.
\]
\end{lemma}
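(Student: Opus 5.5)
The plan is a one-step computation: pull every predictable factor out of the conditional expectation, then apply Lemma~\ref{lem:onestep} on the event where the truncation indicator is nonzero.

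\textbf{Nonnegativity.} By Assumption~\ref{ass:admissible} and $\lambda_t \in [0,1/b_t]$, each factor satisfies $1+\lambda_t Z_t \ge 1 - \lambda_t b_t \ge 0$, since $Z_t \ge -b_t$. Hence $E_t \ge 0$ by induction, and $\widetilde E_t \ge 0$. Integrability is immediate: $1+\lambda_t Z_t \le 1 + \bar\lambda$, so $E_t \le (1+\bar\lambda)^t$.

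\textbf{Supermartingale step.} Write
\[
\widetilde E_t = E_{t-1}\,\1_{\Omega_{\calsym,t-1}}\,\1_{G_t}\,(1+\lambda_t Z_t) = \widetilde E_{t-1}\,\1_{G_t}\,(1+\lambda_t Z_t),
\]
using $\Omega_{\calsym,t} = \Omega_{\calsym,t-1}\cap G_t$. Three facts are $\F_{t-1}$-measurable:
\begin{itemize}
\item $\widetilde E_{t-1}$, by adaptedness of $E_{t-1}$ and $\F_{t-2}\subseteq\F_{t-1}$ measurability of $\Omega_{\calsym,t-1}$;
\item $\1_{G_t}$, by the setup in Section~\ref{sec:setup};
\item $\lambda_t$ and $b_t$, by Assumption~\ref{ass:predictable}.
\end{itemize}
Pulling these out gives
\[
\E[\widetilde E_t\mid\F_{t-1}] = \widetilde E_{t-1}\,\1_{G_t}\bigl(1 + \lambda_t(\E[M_t\mid\F_{t-1}] - b_t)\bigr).
\]
On $G_t$, Lemma~\ref{lem:onestep} gives $\E[M_t\mid\F_{t-1}]\le b_t$. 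Together with $\lambda_t\ge 0$, the bracket is at most $1$ there, so the product is $\le \widetilde E_{t-1}\1_{G_t}$. Off $G_t$ the product is $0$. In both cases it is $\le \widetilde E_{t-1}$, because $\widetilde E_{t-1}\ge 0$.

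Finally, $\E\widetilde E_0 = 1$ holds by definition. (If one insists on $\widetilde E_0 = E_0\1_{\Omega_{\calsym,0}}$, take the empty intersection to be $\Omega$.)

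\textbf{Main obstacle.} The delicate point is the $\F_{t-1}$-measurability of $G_t$. Since the event concerns the buffer-based threshold $\widehat q_t$, which is $\F_{t-1}$-measurable by Assumptions~\ref{ass:predictable} and~\ref{ass:exch}, $G_t$ is determined by $\F_{t-1}$. This is exactly what the paper asserts, and we rely on it. If it were only measurable with respect to a larger $\sigma$-field, I would instead enlarge the filtration to $\F_{t-1}\vee\sigma(G_t)$. I would then check that Lemma~\ref{lem:onestep}'s bound still holds conditionally on it; this works because $G_t$ depends only on the buffer, while the fresh test point is independent of it under Assumption~\ref{ass:exch}.
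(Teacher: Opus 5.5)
Your proof is correct and follows the paper's argument essentially line for line. You get nonnegativity from $1+\lambda_t Z_t \ge 1-\lambda_t b_t \ge 0$, factor out the $\F_{t-1}$-measurable quantities $E_{t-1}\1_{\Omega_{\calsym,t-1}}$, $\1_{G_t}$ and $\lambda_t$, and then invoke Lemma~\ref{lem:onestep} on $G_t$, with the product vanishing off $G_t$. Your integrability remark is a small addition the paper omits; under the lemma's stated range $\lambda_t \in [0,1/b_t]$ (rather than the algorithm's cap $\bar\lambda$), the bound you want is $1+\lambda_t Z_t \le 1/b_t \le 1/\alpha$, since $b_t \ge \alpha$.
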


\begin{proof}
The proof is provided in Appendix~\ref{app:proof-supermart}.
\end{proof}

\begin{theorem}[Time-uniform alarm validity]\label{thm:alarm}
Under the assumptions of Lemma~\ref{lem:supermart}, for every $\delta_e \in (0,1)$,
\[
\Pbb\!\left(\sup_{t \ge 1} E_t \ge 1/\delta_e\right) \;\le\; \delta_e + \delta_{\calsym}.
\]
In particular, with the canonical split $\delta_e = \delta_{\calsym} = \delta/2$, the alarm time $\tau_{\mathrm{alarm}} = \inf\{t : E_t \ge 2/\delta\}$ satisfies $\Pbb(\tau_{\mathrm{alarm}} < \infty \mid H_0) \le \delta$.
\end{theorem}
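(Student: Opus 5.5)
The plan is to derive Theorem~\ref{thm:alarm} from Lemma~\ref{lem:supermart} via Ville's inequality for the truncated process $\widetilde E_t$, then transfer the bound to the untruncated $E_t$ by splitting on the cumulative cal-good event $\Omega_{\calsym} = \bigcap_{t\ge 1} G_t$.

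\textbf{Step 1 (Ville on the truncated process).} By Lemma~\ref{lem:supermart}, $(\widetilde E_t)_{t\ge 0}$ is a nonnegative supermartingale with respect to $(\F_t)$ and satisfies $\widetilde E_0 = 1$. Ville's maximal inequality therefore gives $\Pbb(\sup_{t\ge 0}\widetilde E_t \ge 1/\delta_e) \le \delta_e\,\E\widetilde E_0 = \delta_e$. If one prefers to avoid citing Ville for an infinite horizon directly, the same bound follows from optional stopping at $\tau\wedge T$ with $\tau = \inf\{t : \widetilde E_t \ge 1/\delta_e\}$, using Markov's inequality to get $\Pbb(\tau \le T) \le \delta_e$, and then letting $T \to \infty$ by monotone convergence.

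\textbf{Step 2 (splitting on $\Omega_{\calsym}$).} Decompose the alarm event as
\[
\{\sup_t E_t \ge 1/\delta_e\} \subseteq \big(\{\sup_t E_t \ge 1/\delta_e\}\cap\Omega_{\calsym}\big) \cup \Omega_{\calsym}^c .
\]
On $\Omega_{\calsym}$ we have $\1_{\Omega_{\calsym,t}} = 1$ for every $t$, so $E_t = \widetilde E_t$ for all $t$. The first set is therefore contained in $\{\sup_t \widetilde E_t \ge 1/\delta_e\}$, whose probability is at most $\delta_e$ by Step~1. For the second set, a union bound gives $\Pbb(\Omega_{\calsym}^c) \le \sum_t \Pbb(G_t^c) \le \sum_t \delta_t^{\calsym} \le \delta_{\calsym}$, using $\Pbb(G_t)\ge 1-\delta_t^{\calsym}$ from Section~\ref{sec:setup} together with summability of the budget. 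Adding the two contributions yields the bound $\delta_e + \delta_{\calsym}$.

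\textbf{Step 3 (canonical split).} Set $\delta_e = \delta_{\calsym} = \delta/2$. The alarm time $\tau_{\mathrm{alarm}} = \inf\{t : E_t \ge 2/\delta\}$ is finite exactly on the event $\{\sup_t E_t \ge 1/\delta_e\}$, so Step~2 gives $\Pbb(\tau_{\mathrm{alarm}}<\infty)\le\delta$. This probability is computed under the deployment null, since the i.i.d.\ Assumption~\ref{ass:exch} is what Lemma~\ref{lem:onestep} and hence Lemma~\ref{lem:supermart} rely on.

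The only delicate point is Step~2's identification $E_t = \widetilde E_t$, which must hold simultaneously for all $t$. This requires the intersection over \emph{all} times, not just $G_t$ at the current step; the truncation $\1_{\bigcap_{s\le t}G_s}$ is designed for exactly this. One should also check that $\Omega_{\calsym,t}$ is $\F_{t-1}$-measurable, but that is already used inside Lemma~\ref{lem:supermart} and need not be re-proved here.
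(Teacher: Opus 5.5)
Your proposal is correct and follows the paper's own proof essentially verbatim. Both apply Ville's inequality to the truncated supermartingale $\widetilde E_t$ from Lemma~\ref{lem:supermart}, then split on $\Omega_{\calsym}$, on which $\widetilde E_t = E_t$ for all $t$, and use $\Pbb(\Omega_{\calsym}^c)\le\delta_{\calsym}$ from the summable budget. One small wording fix in Step~3: $\{\tau_{\mathrm{alarm}}<\infty\}$ is only contained in, not equal to, $\{\sup_t E_t \ge 2/\delta\}$, since the supremum need not be attained; that inclusion is the direction you need, so the conclusion stands.
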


\begin{proof}
The proof is provided in Appendix~\ref{app:proof-alarm}.
\end{proof}

\begin{remark}[Total probability budget]
Theorem~\ref{thm:alarm}'s budget $\delta_e + \delta_{\calsym}$ holds on the training-good event of Theorem~\ref{thm:propagation}, which has probability $\ge 1 - \delta_{\train}$ over the training draws. Combining by union bound, the unconditional alarm guarantee is $\Pbb(\sup_t E_t \ge 1/\delta_e) \le \delta_e + \delta_{\calsym} + \delta_{\train}$, matching the user-facing total stated in Section~\ref{sec:intro}. The canonical equal split $\delta_e = \delta_{\calsym} = \delta_{\train} = \delta/3$ recovers a single $\delta$-level guarantee.
\end{remark}

\subsection{Cumulative-miscoverage envelope}

The alarm at $\sup_t E_t \ge 1/\delta$ controls the probability of \emph{ever} flagging a violation. To get a numerically interpretable coverage statement at any predictable stopping time $\tau$ we use the cumulative residual.

\begin{theorem}[Time-uniform Hoeffding envelope]\label{thm:envelope}
Under Lemma~\ref{lem:onestep} and Assumption~\ref{ass:predictable}, define $S_t := \sum_{s=1}^t (M_s - b_s) = \sum_{s=1}^t Z_s$. Then $|Z_s| \le 1$ deterministically, and for every $\delta_e \in (0,1)$ there is an explicit boundary $u_t(\delta_e)$ with
\[
\Pbb\!\left(\exists\, t \ge 1: S_t > u_t(\delta_e)\right) \;\le\; \delta_e + \delta_{\calsym}.
\]
A closed-form admissible choice (polynomial-stitching variant of~\citep{howard2021nonparametric}) is
\[
u_t(\delta_e) \;=\; c_H\,\sqrt{\tfrac{1}{2}\,t \,\Big(\log(1/\delta_e) + \log\big(1 + \log_2 t\big)\Big)}
\]
with absolute constant $c_H \le 1.7$. In particular, with probability at least $1-\delta_e-\delta_{\calsym}$, for every stopping time $\tau$ adapted to $(\F_t)$,
\[
\frac{1}{\tau} \sum_{s=1}^{\tau} M_s \;\le\; \alpha \;+\; \frac{1}{\tau}\sum_{s=1}^{\tau}\!\left( \tfrac{1}{n_{\calsym,s}+1} + \Delta_{\fl,s} + \Delta_{\rag,s} + \Delta_{\train,s} \right) \;+\; \frac{u_\tau(\delta_e)}{\tau}.
\]
The envelope width $u_\tau(\delta_e)/\tau = O(\sqrt{\log\log \tau / \tau})$ vanishes with $\tau$.
\end{theorem}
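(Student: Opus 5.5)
The plan mirrors the route to Theorem~\ref{thm:alarm}: truncate the residual process on the cumulative cal-good event $\Omega_{\calsym,t}$, apply a time-uniform maximal inequality to a nonnegative supermartingale, and union-bound the truncation away at cost $\delta_{\calsym}$.

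\textbf{Step 1: truncated supermartingale differences.} First, $|Z_s|\le 1$ is immediate, since $M_s\in\{0,1\}$ and $b_s\in(0,1)$ by Assumption~\ref{ass:admissible} (or simply $b_s\in[0,1]$). Next define the truncated increments $\widetilde Z_s := Z_s\,\1_{\Omega_{\calsym,s}}$. Because $\Omega_{\calsym,s}$ is $\F_{s-1}$-measurable and $b_s$ is predictable, Lemma~\ref{lem:onestep} gives
\[
\E[\widetilde Z_s\mid\F_{s-1}] = \1_{\Omega_{\calsym,s}}\big(\E[M_s\mid\F_{s-1}]-b_s\big)\le 0 .
\]
Moreover, $\widetilde Z_s$ lies in an interval of length $1$ whose endpoints $[-b_s,1-b_s]$ are predictable. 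Hence $\widetilde S_t:=\sum_{s\le t}\widetilde Z_s$ is a supermartingale with conditionally $1/2$-sub-Gaussian increments, by Hoeffding's lemma applied conditionally.

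\textbf{Step 2: exponential supermartingales and stitching.} For each fixed $\lambda\ge 0$, the process
\[
L_t^\lambda := \exp\!\big(\lambda\widetilde S_t - \lambda^2 t/8\big)
\]
is a nonnegative supermartingale with $L_0^\lambda=1$, again by conditional Hoeffding. Applying Ville's inequality gives $\Pbb(\exists t: L_t^\lambda\ge 1/\delta)\le\delta$, which yields a linear boundary in $t$. To obtain the $\sqrt{t\log\log t}$ shape, I will use the polynomial stitching of \citet{howard2021nonparametric}. Partition time into geometric epochs $[2^k,2^{k+1})$. On epoch $k$, allot confidence $\delta_e\,w_k$ with $w_k\propto 1/(k+1)^2$ (normalised to sum $\le 1$), and pick $\lambda_k$ optimised for $t\approx 2^k$. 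Within epoch $k$ the linear boundary is then at most $c\sqrt{t(\log(1/\delta_e)+2\log(k+1))/2}$. A union bound over $k$ gives total failure probability at most $\delta_e$. Since $k+1\le 1+\log_2 t$, the factor-of-$2$ slack from the epoch width and the square on $\log(k+1)$ are absorbed into $c_H$. The main obstacle is this constant bookkeeping: a crude check gives a constant around $\sqrt2\cdot(1+\epsilon)$, so $c_H\le 1.7$ is plausible. If the bookkeeping gets tight, I will instead cite the stitched bound of \citet{howard2021time,howard2021nonparametric} directly with $\sigma^2=1/4$ per step.

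\textbf{Step 3: removing the truncation, and the stopping-time corollary.} On $\Omega_{\calsym}=\bigcap_t G_t$ we have $\widetilde S_t=S_t$ for every $t$. Therefore
\[
\Pbb(\exists t: S_t>u_t)\le \Pbb(\exists t:\widetilde S_t>u_t)+\Pbb(\Omega_{\calsym}^c)\le \delta_e+\delta_{\calsym}.
\]
On the complementary event, $S_t\le u_t(\delta_e)$ holds simultaneously for all $t$, so it holds in particular at $t=\tau(\omega)$ for any random time $\tau$, stopping or not. Dividing by $\tau$ and substituting the definition of $b_s$ gives the displayed average-miscoverage bound. Finally, $u_\tau/\tau=O(\sqrt{\log\log\tau/\tau})$ can be read off directly from the formula for $u_t$.
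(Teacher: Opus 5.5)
Your proposal is correct and follows the paper's proof. Both arguments apply conditional Hoeffding on the cal-good event, use Ville's inequality for each fixed $\lambda$, stitch with $(k+1)^{-2}$ budgets (both take the constant $c_H\le 1.7$ from \citet{howard2021nonparametric}), split off $\Omega_{\calsym}^c$ at cost $\delta_{\calsym}$, and evaluate at $\tau$. The only difference is cosmetic: you zero the increments off $\Omega_{\calsym,s}$ instead of multiplying $\exp(\lambda S_t-\lambda^2 t/8)$ by $\1_{\Omega_{\calsym,t}}$, which has the small advantage that $\widetilde S_t$ is itself a sub-Gaussian supermartingale to which the stitched bound applies verbatim.
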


\begin{proof}
The proof is provided in Appendix~\ref{app:proof-envelope}.
\end{proof}

\begin{remark}
The envelope and the betting e-process are complementary: $E_t$ accumulates evidence multiplicatively and is best when violations are persistent (high power for sustained drift), while $u_t(\delta)$ controls the cumulative deviation at any finite horizon and is best for reporting an interpretable upper bound on the realized miscoverage rate. We track both and report whichever is tighter at the requested $\delta$.
\end{remark}

\subsection{Safe adaptive control}

\begin{theorem}[Safe adaptive control]\label{thm:safe}
Let $\Pi$ be any controller that maps $\F_{t-1}$ to a (possibly randomized) action in $\mathcal{A}_t$, where $\mathcal{A}_t$ ranges over: recalibration refreshes (changing $\widehat q_t$), per-node bandwidth changes (changing $B_{i,t}$ or $B_t^{\calsym}$), and student refreshes (changing $\widehat P_t$). If every action is $\F_{t-1}$-measurable, then under Assumptions~\ref{ass:predictable}--\ref{ass:admissible} the supermartingale property of the truncated e-process $(\widetilde E_t)$ and the time-uniform Hoeffding envelope of Theorem~\ref{thm:envelope} both continue to hold. Hence the alarm bound $\Pbb(\sup_t E_t \ge 1/\delta_e) \le \delta_e + \delta_{\calsym}$ of Theorem~\ref{thm:alarm} and the envelope bound of Theorem~\ref{thm:envelope} apply to the controlled trajectory.
\end{theorem}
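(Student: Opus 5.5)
The plan is to reduce Theorem~\ref{thm:safe} to the observation that none of the earlier proofs used anything about the schedules beyond their $\F_{t-1}$-measurability. A controlled trajectory is then just another instance of the protocol satisfying Assumption~\ref{ass:predictable}, so Lemmas~\ref{lem:onestep},~\ref{lem:supermart} and Theorems~\ref{thm:alarm},~\ref{thm:envelope} apply verbatim.

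First I would formalize the controlled trajectory. The controller's action $\mathcal{A}_t = \Pi(\F_{t-1}, U_t)$, where $U_t$ is auxiliary randomization independent of the data stream, determines $\widehat q_t$, $\{B_{i,t}\}$, $B_t^{\calsym}$ and $\widehat P_t$. I would enlarge $\F_{t-1}$ to include $U_t$ (equivalently, condition on the whole randomization sequence, which is independent of the data). This keeps these objects $\F_{t-1}$-measurable. Consequently $\Delta_{\fl,t}$, $\Delta_{\rag,t}$, $\Delta_{\train,t}$, $n_{\calsym,t}$, $b_t$, $\lambda_t$ and the event $G_t$ are all $\F_{t-1}$-measurable.

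The key check is that Assumption~\ref{ass:exch} survives this enlargement. I need $(X_t,Y_t)$ to be independent of $\F_{t-1}$ with law $\mathcal{P}$, and the buffer index set $I_t^{\calsym}$ to be predictable and contained in $\{1,\dots,t-1\}$. This holds because the controller only sees past data and independent noise, and a refreshed student $\widehat P_t$ is trained on past data only. The dither noise in Assumption (B3) must remain conditionally independent and mean-zero given $(\F_{t-1},X_t)$ at the new bandwidths $B_{i,t}$, which is built into the quantizer. With these facts in hand, the per-step argument of Lemma~\ref{lem:onestep} goes through with every controlled quantity frozen. That argument consists of rank exchangeability of the test point with the buffer, the deviation bound \eqref{eq:cal-deviation} defining $G_t$, the dither Lipschitz step, and the training FTC step. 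Hence $\E[M_t\mid\F_{t-1}]\1_{G_t}\le b_t\1_{G_t}$ holds on the controlled trajectory.

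Given the one-step bound, the supermartingale step is the same computation as in Lemma~\ref{lem:supermart}:
\[
\E[\widetilde E_t\mid\F_{t-1}] = E_{t-1}\1_{\Omega_{\calsym,t}}\bigl(1+\lambda_t(\E[M_t\mid\F_{t-1}]-b_t)\bigr)\le \widetilde E_{t-1},
\]
using $\lambda_t\ge 0$ and the $\F_{t-1}$-measurability of $\Omega_{\calsym,t}$. Nonnegativity follows from Assumption~\ref{ass:admissible}. Theorem~\ref{thm:alarm} then follows as before by Ville's inequality on $\widetilde E$, plus the bound $\Pbb(\Omega_{\calsym}^c)\le\sum_t\delta_t^{\calsym}\le\delta_{\calsym}$, and on $\Omega_{\calsym}$ we have $E_t=\widetilde E_t$. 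For the envelope, $Z_t\1_{\Omega_{\calsym,t}}$ is a bounded supermartingale difference sequence under the controlled filtration, so the Hoeffding-stitched boundary of Theorem~\ref{thm:envelope} applies unchanged.

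The main obstacle is the measurability bookkeeping around $G_t$. The bound $\Pbb(G_t^c)\le\delta_t^{\calsym}$ must hold conditionally on the predictable choices of $n_{\calsym,t}$ and $B_t^{\calsym}$, even though these were selected after looking at data. My first approach would be to prove the deviation bound \eqref{eq:cal-deviation} conditionally on the predictable buffer index set and the current $\widehat P_t$. The buffer points are i.i.d.\ and independent of the choice of $I_t^{\calsym}$ under Assumption~\ref{ass:exch}, and $\phi(B_t^{\calsym})$ is deterministic given the budget. Averaging this conditional bound gives the unconditional budget. If that conditional independence fails, for example because a student refreshed on buffer points is scored on the same points, I would require the refresh to use data disjoint from the next buffer, and state that as part of the predictability hypothesis.
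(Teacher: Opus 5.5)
Your proposal is correct and is essentially the paper's proof: the paper likewise argues only that the post-action quantities $(\widehat q_t, B_{\bullet,t}, \widehat P_t)$ remain $\F_{t-1}$-measurable, hence so do $b_t$ and $G_t$, so Lemma~\ref{lem:onestep}, the truncated-supermartingale computation of Lemma~\ref{lem:supermart}, Ville's inequality, and the truncated Hoeffding stitching go through verbatim (your envelope argument via the truncated increments $Z_t\1_{\Omega_{\calsym,t}}$ is an equivalent repackaging of the paper's truncated exponential process). Your additional concern about $\Pbb(G_t)\ge 1-\delta_t^{\calsym}$ under data-dependent buffers and student refreshes is a point the paper simply asserts; note, however, that $\F_{t-1}$-measurability of $I_t^{\calsym}$ alone does not make the buffer points independent of their own selection, so the non-selective-buffer and disjoint-refresh conditions you invoke are genuinely extra hypotheses rather than consequences of Assumption~\ref{ass:exch}.
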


\begin{proof}
The proof is provided in Appendix~\ref{app:proof-safe}.
\end{proof}

\paragraph{Sticky vs.\ resetting alarms.}\label{sec:safe-discussion}
Algorithm~\ref{alg:anytime-fcrag} does not reset $E_t$ after an alarm (sticky mode). A reset variant with per-epoch budgets is also valid; both variants preserve Theorem~\ref{thm:alarm} (Appendix~\ref{app:proofs}).

\subsection{Training propagation}

\begin{theorem}[Training propagation under (B5')]\label{thm:propagation}
Fix a summable per-training-event budget $\{\delta_r\}_{r \ge 1}$ with $\sum_{r \ge 1} \delta_r \le \delta_{\train}$ (canonical choice $\delta_r = 6\delta_{\train}/(\pi^2 r^2)$). Suppose the student $\widehat P_t$ used at deployment time $t$ comes from FPLD training event $r(t)$~\citep[Theorem~1]{fcrag2026} with parameters $(K, n_{r(t)}, m_{r(t)}, B_{r(t)}, V)$, and let
\[
\mathcal{R}_{r(t)} \;:=\; \frac{c_1 d}{K\,n_{r(t)}} + c_2\,\rho\,\frac{V \log(V/\delta_{r(t)})}{\sqrt{m_{r(t)}}} + c_3\,2^{-2 B_{r(t)}/V} + \varepsilon_{\mathrm{opt}} + \varepsilon_{\mathrm{fit}}
\]
denote the corresponding training-rate bound. Under the strengthened conditional-density clause of Assumption~\ref{ass:bounded}, with probability at least $1 - \delta_{\train}$ over the training draws, simultaneously over all $t \ge 1$,
\[
\Delta_{\train,t} \;\le\; f_{\max,t}\,\big(\mathcal{R}_{r(t)} \;+\; \sqrt{2\,\mathcal{R}_{r(t)}}\big).
\]
\end{theorem}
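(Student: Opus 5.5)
The argument composes three pieces: the FPLD rate of \citep[Theorem~1]{fcrag2026}, a union bound over training events with the summable budget $\{\delta_r\}$, and the deterministic FTC + (B5') chain of \citep[Corollary~3]{fcrag2026} that maps a KL bound to a coverage slack. The proof is essentially bookkeeping. The only genuinely sequential issue is that the number of training events, and the index $r(t)$, are random and controller-driven.

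\emph{Step 1 (per-event rate).} For each training event index $r\ge 1$, invoke \citep[Theorem~1]{fcrag2026} at confidence $\delta_r$. This gives an event $T_r$ with $\Pbb(T_r^c)\le\delta_r$ on which the student $\widehat P^{(r)}$ produced at event $r$ satisfies
\[
\E_X\big[\KL(P^\star(\cdot\mid X)\,\|\,\widehat P^{(r)}(\cdot\mid X))\big]\le\mathcal{R}_r .
\]
The $\log(V/\delta_r)$ factor in $\mathcal{R}_r$ is exactly where the per-event budget enters. To make this legitimate when the event parameters $(n_r,m_r,B_r)$ and its timing are chosen by the predictable controller $\Pi$, I would condition on the history $\F_{\tau_r-1}$ at the $r$-th training time $\tau_r$. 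Given that history, the training parameters are fixed and the fresh training draws are independent of the past, so Theorem~1 applies conditionally. Integrating then yields $\Pbb(T_r^c)\le\delta_r$ unconditionally. If fewer than $r$ events occur, set $T_r$ to be the whole space.

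\emph{Step 2 (uniformity over $t$).} Let $T:=\bigcap_{r\ge1}T_r$. By the union bound, $\Pbb(T^c)\le\sum_r\delta_r\le\delta_{\train}$. On $T$, every student ever deployed satisfies its rate bound. Since $\widehat P_t=\widehat P^{(r(t))}$ is piecewise constant between refreshes, this gives $\varepsilon_{\train,t}\le\mathcal{R}_{r(t)}$ simultaneously for all $t$. The key point is that no union over $t$ is needed, only over events, so the budget stays summable even though the horizon is unbounded.

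\emph{Step 3 (KL to coverage slack).} Fix $t$ and work on $T$. Apply the chain of Section~\ref{sec:slack} with $\varepsilon_{\train,t}$ replaced by its upper bound $\mathcal{R}_{r(t)}$.
\begin{itemize}[leftmargin=*, itemsep=1pt]
\item First, write the difference between the indicators $\1\{s_t^\star\le q\}$ and $\1\{s^\star_{\mathrm{ideal}}\le q\}$ as an FTC integral of the conditional density of $s^\star_{\mathrm{ideal}}$ given $\Delta_t$, over an interval of length $|\Delta_t|$.
\item Second, bound that density by $f_{\max,t}$ using the training-residual clause of Assumption~\ref{ass:bounded}. This yields $f_{\max,t}\E|\Delta_t(X_t,Y_t)|$.
\item Third, control $\E|\log(P^\star/\widehat P_t)|$ under $P^\star$ by $\KL+\sqrt{2\KL}$. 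Split the log-ratio into its positive and negative parts: the signed expectation equals the KL, and the negative part is bounded via Pinsker or total variation.
\end{itemize}
The map $x\mapsto x+\sqrt{2x}$ is monotone, so substituting $\mathcal{R}_{r(t)}$ gives the claim.

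I expect Step 1 to be the main obstacle. Applying a fixed-sample high-probability theorem at random, adaptively chosen times needs the optional-sampling/conditioning argument above. It also requires that the probe set and local training data at event $r$ be fresh, or at least independent of the deployment filtration. If that independence fails, I would fall back to stating the result conditionally on the controller's training schedule.

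A secondary technicality is in Step 3: $\E|\Delta_t|\le\KL+\sqrt{2\KL}$ requires care with the negative part of the log-ratio. I would take this directly from \citep[Corollary~3]{fcrag2026} rather than reprove it.
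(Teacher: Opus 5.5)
Your proposal is correct and follows the paper's proof essentially step for step. You invoke \citep[Theorem~1]{fcrag2026} at confidence $\delta_r$ per training event, union-bound over events rather than over $t$ using $\sum_r \delta_r \le \delta_{\train}$, push the KL bound through the indicator-difference + FTC + (B5') chain with $\E_{P^\star}|\Delta_t| \le \bar K_t + \sqrt{2\bar K_t}$ (via $\log(1+u)\le u$ and Pinsker, as in \citep[Corollary~3]{fcrag2026}), and finish by monotonicity of $x \mapsto x + \sqrt{2x}$. Your conditioning argument for controller-chosen refresh times and parameters is a refinement the paper leaves implicit (it simply asserts the per-event bound), and you correctly identify that it goes through only if the training draws are fresh relative to the deployment filtration.
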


\begin{proof}
The proof is provided in Appendix~\ref{app:proof-propagation}.
\end{proof}

In the small-$\mathcal{R}$ regime the $\sqrt{2\,\mathcal{R}_{r(t)}}$ summand dominates, recovering the Pinsker shape. Absent (B5'), only the weaker $O(\mathcal{R}^{1/4})$ rate is recoverable; the sequential-testing layer is unaffected either way (Appendix~\ref{app:proofs}).

\paragraph{Open direction.} A communication-optimality oracle inequality of the form $\E\sum_{t=1}^T \Gamma_t^{\Pi} \le \inf_{\Pi'\in\mathfrak{C}_{\mathrm{valid}}}\E\sum \Gamma_t^{\Pi'} + \mathrm{overhead}(T)$, where $\mathfrak{C}_{\mathrm{valid}}$ is the class of validity-preserving controllers, would be substantially stronger than Theorem~\ref{thm:safe}. We do not attempt it here and flag it as future work.

\section{Experiments}\label{sec:experiments}
The experiments split into two qualitatively distinct roles. \emph{Synthetic experiments} probe the sequential-testing layer in isolation on Bernoulli streams whose conditional miscoverage is set by hand, decoupled from the FC-RAG / FPLD pipeline. \emph{Real-world experiments} deploy the GPT-2-small + MiniLM swarm of~\citep[\S 7.5]{fcrag2026} on three benchmarks --- MMLU~\citep{hendrycks2021mmlu} (in-cal-set redistribution), DBpedia ontology~\citep{lehmann2015dbpedia} (drift to a class the swarm has no node for), and AG News~\citep{zhang2015agnews} (drift to a target recoverable from GPT-2 priors) --- and compare against conformal test martingales~\citep{vovk2021retrain}, CUSUM, Shiryaev--Roberts, and online conformal prediction~\citep{angelopoulos2024online}. Real-world runs use $3$ calibration splits $\times$ $5$ deployment seeds ($15$ trajectories per benchmark); synthetic runs use $200$--$2000$ seeds. All runs use $\alpha = 0.10$, $\delta_e = 0.05$, $\delta_{\calsym} = 0.05$; per-experiment details and expanded ablations are in Appendix~\ref{app:exp-detail}.

\subsection{Sequential-testing layer validation}
\label{sec:exp-validation}

On synthetic Bernoulli streams, Type-I rate is $0.0105$ at the boundary of $H_0$ and $0.0025$ in the interior, both an order of magnitude below the $\delta_e + \delta_{\calsym} = 0.10$ budget (Table~\ref{tab:e1}). Detection power saturates at $\ge 99.6\%$ from drift $+0.04$ onward, with median delay $3047 \to 687$ steps as drift grows from $+0.04$ to $+0.15$ (Figure~\ref{fig:e2-curve}, Table~\ref{tab:e2}), matching the predicted $\Omega(\log\log T / \mathrm{drift}^2)$ rate. The Hoeffding-stitched envelope holds across all $4000$ null trajectories with $0.0000$ breach rate, and a Monte-Carlo sanity check confirms Lemma~\ref{lem:onestep} pointwise (Appendix~\ref{app:e10}).

\begin{table}[htbp]
\centering\small
\caption{Type-I rate is an order of magnitude below the $\delta_e + \delta_{\calsym} = 0.10$ budget on synthetic Bernoulli streams. $T = 5000$, $2000$ seeds per regime.}
\label{tab:e1}
\begin{tabular}{l rrrr}
\toprule
Regime & alarm rate & median $\sup_t E_t$ & $p_{95}\sup_t E_t$ & $p_{99}\sup_t E_t$ \\
\midrule
Boundary ($\E[Z_t\mid\F_{t-1}] = 0$) & 0.0105 & 1.131 & 6.381 & 21.230 \\
Interior ($\E[Z_t\mid\F_{t-1}] < 0$) & 0.0025 & 1.000 & 3.063 & 6.622 \\
\bottomrule
\end{tabular}
\end{table}

\begin{figure}[htbp]
\centering
\includegraphics[width=0.9\linewidth]{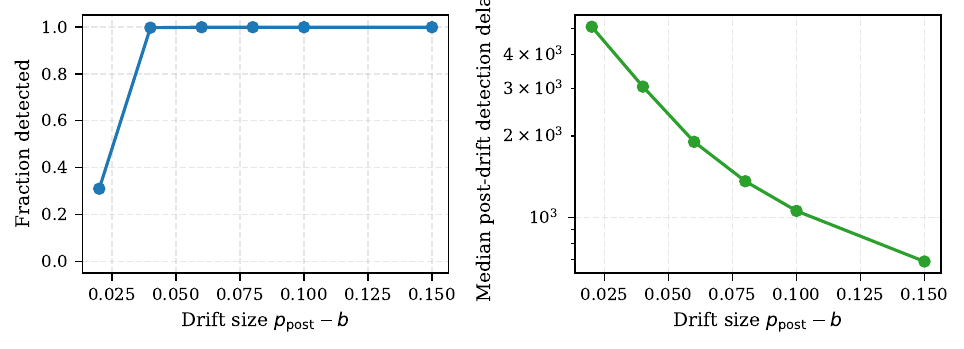}
\caption{Detection saturates at $\ge 99.6\%$ by drift $+0.04$, with delay decaying log-linearly from $\sim 5000$ steps at $+0.02$ to $\sim 700$ at $+0.15$.}
\label{fig:e2-curve}
\end{figure}

\begin{table}[htbp]
\centering\small
\caption{Delay decays inversely with drift size, matching the nonparametric $\Omega(\log\log T / \mathrm{drift}^2)$ lower bound. Post-drift steps until $E_t \ge 1/\delta_e$; $T = 8000$, drift onset $t = 2000$, $500$ seeds.}
\label{tab:e2}
\begin{tabular}{l rrr}
\toprule
Drift size & fraction detected & median delay & $p_{95}$ delay \\
\midrule
$+0.02$ & 0.310 & 5076 & 5938 \\
$+0.04$ & 0.996 & 3047 & 4464 \\
$+0.06$ & 0.998 & 1904 & 2685 \\
$+0.08$ & 0.998 & 1361 & 1864 \\
$+0.10$ & 0.998 & 1057 & 1480 \\
$+0.15$ & 0.998 & 687 & 931 \\
\bottomrule
\end{tabular}
\end{table}

\subsection{Slack decomposition and cost overhead}
\label{sec:exp-slack}

Empirical $\Delta_{\rag}$ tracks the variance form on a log-log axis with slope exactly $-0.5000$ for $K \in \{1, \dots, 128\}$ (Figure~\ref{fig:e7-e8}, left). The training slack's two-term form holds tightly: empirical-to-theory ratio $\le 0.91$ (mean $0.65$) across $20$ Bernoulli pairs with $\mathrm{Beta}(2,2)$ residuals. The cost-overhead factor $R(t)$ grows from $1.07$ at $t=1$ to only $2.72$ at $t = 10^5$ (Figure~\ref{fig:e7-e8}, right), so the conditional construction is essentially free at any practical horizon.

\begin{figure}[htbp]
\centering
\begin{minipage}[c]{0.46\linewidth}
\centering\includegraphics[width=\linewidth]{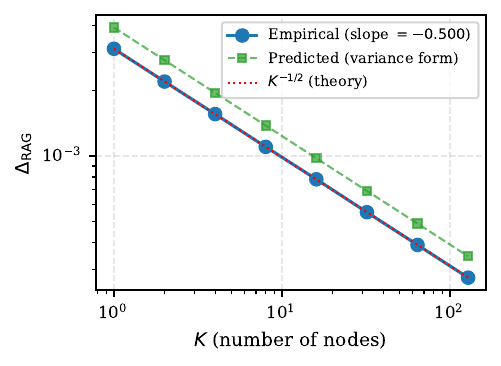}
\end{minipage}\hfill
\begin{minipage}[c]{0.46\linewidth}
\centering\includegraphics[width=\linewidth]{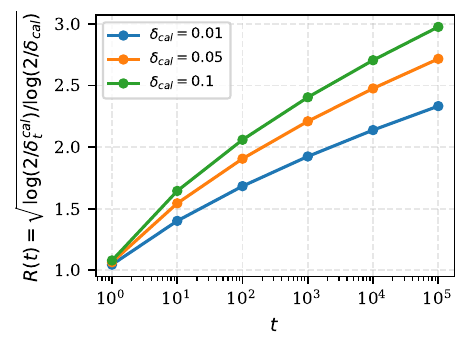}
\end{minipage}
\caption{The slack decomposition is numerically tight and the conditional construction is essentially free. \textbf{Left}: $\Delta_{\rag}$ vs.\ $K$ matches the theoretical $-1/2$ slope exactly. \textbf{Right}: cost-overhead $R(t) \le 2\times$ at all practical horizons, growing only logarithmically.}
\label{fig:e7-e8}
\end{figure}

Additional necessity checks (cal-deviation budget, predictability of the controller layer) and robustness studies (aGRAPA vs.\ constant-$\lambda$ bettors, hyperparameter sensitivity) are reported in Appendix~\ref{app:necessity}.

\subsection{Adaptive bandwidth controller}
\label{sec:exp-adaptive}

On a synthetic stream with $+0.20$ drift at $t=2500$, all three bandwidth regimes (low-only, high-only, adaptive) reach $100\%$ alarm rate, but the adaptive regime pays only $1.708$ on average --- a $57\%$ cost saving (Table~\ref{tab:e4}). On the GPT-2-small swarm ($B_i = 8$ low, $B_i = 12$ high), the same pattern holds: on DBpedia all three regimes alarm at $100\%$ and adaptive saves $14\%$ ($41.5$ vs.\ $48.0$); on MMLU and AG News (no genuine drift) the controller correctly does not escalate (Figure~\ref{fig:a5-costvsalarm}). Predictable escalation is a real operational lever: alarm validity is preserved, and high bandwidth is paid only where needed.

\begin{table}[htbp]
\centering\small
\caption{Adaptive controller saves $57\%$ cost at identical $100\%$ alarm rate. Drift $+0.20$ injected at $t = 2500$; $T = 5000$, $200$ seeds.}
\label{tab:e4}
\begin{tabular}{l rr}
\toprule
Regime & mean cost & alarm rate \\
\midrule
Low only ($\Delta_{\rag} = 0.04$) & 1.000 & 1.000 \\
High only ($\Delta_{\rag} = 0.005$) & 4.000 & 1.000 \\
Adaptive ($0.5/\delta_e$ warning trigger) & 1.708 & 1.000 \\
\bottomrule
\end{tabular}
\end{table}

\begin{figure}[htbp]
\centering
\includegraphics[width=0.95\linewidth]{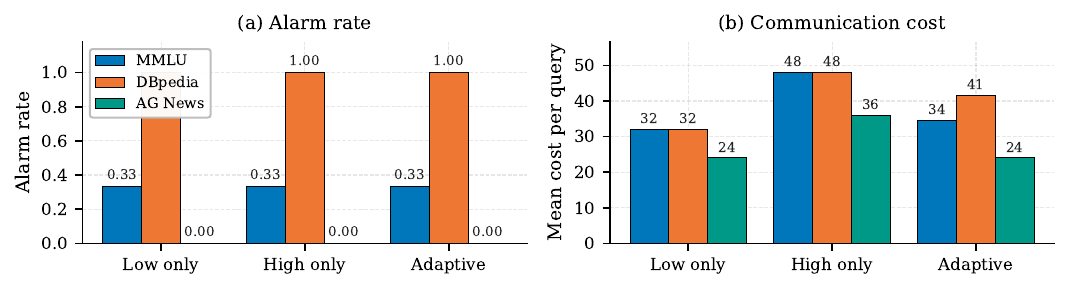}
\caption{Adaptive controller (Theorem~\ref{thm:safe}). All three bandwidth regimes match in alarm rate; the adaptive regime saves $14\%$ of communication cost on DBpedia ($41.5$ vs.\ $48.0$) without sacrificing detection.}
\label{fig:a5-costvsalarm}
\end{figure}

\subsection{End-to-end real-world deployment and comparisons}
\label{sec:exp-realworld}

The end-to-end result (Figure~\ref{fig:a1-e2e}, Table~\ref{tab:realworld-summary}) on the GPT-2-small + MiniLM swarm with sudden drift at $t = 500$ over $T = 2000$ ($15$ trajectories): the e-process is silent on MMLU in-cal redistribution (alarm $0.00$, post-miscov $0.149$) and AG News priors-recoverable drift ($0.00$, $0.127$), and fires on DBpedia genuine drift ($0.33$, rising to $1.00$ in the head-to-head; post-miscov $0.370 > b \approx 0.21$). Three robustness ablations (drift schedule, heterogeneous bandwidth, FPLD multi-refresh) preserve this discriminative behavior (Appendix~\ref{app:realworld-ablations}).

\begin{figure}[htbp]
\centering
\includegraphics[width=\linewidth]{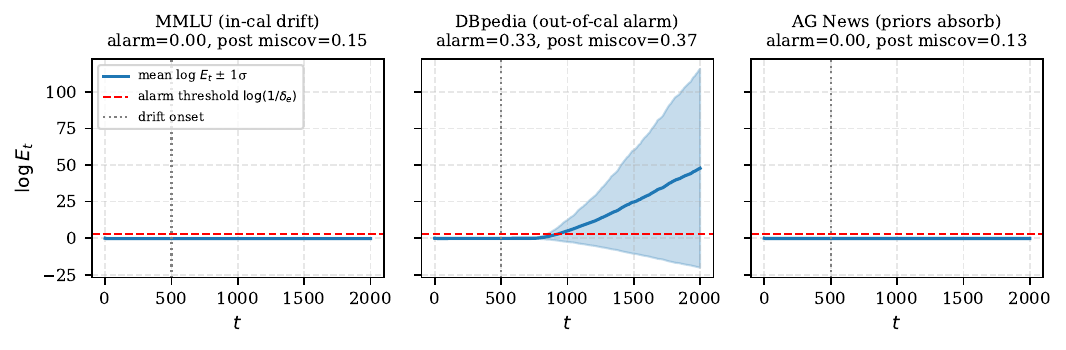}
\caption{The alarm fires when and only when coverage genuinely breaks. Trajectory of $\log E_t$ on the GPT-2-small + MiniLM swarm under sudden drift at $t=500$: silent on MMLU and AG News, fires on DBpedia.}
\label{fig:a1-e2e}
\end{figure}

Against prior monitoring methods, our e-process matches conformal test martingales~\citep{vovk2021retrain} in null validity ($0.009$ vs.\ $0.000$) and dominates on drift ($1.000$ vs.\ $0.000$ on large drift), because CTM does not exploit the slack-decomposed null bound. Parametric CUSUM and Shiryaev--Roberts match or exceed at every drift by assuming the alternative is known; our nonparametric e-process is slower only at borderline drift, the expected price of distribution-free testing~\citep[\S 6]{howard2021nonparametric}. Online conformal~\citep{angelopoulos2024online} adapts $\alpha_t$ to maintain coverage rather than alarming, so the two are complementary.

The real-world head-to-head makes this concrete (Figure~\ref{fig:c4-comparative}): on DBpedia our alarm fires in every trajectory while OC compresses $\alpha_t$ from $0.10$ to $0.031$; on MMLU and AG News both heads correctly stay quiet. Aggregated (Table~\ref{tab:realworld-summary}), the alarm fires when and only when coverage genuinely breaks.

\begin{figure}[htbp]
\centering
\includegraphics[width=\linewidth]{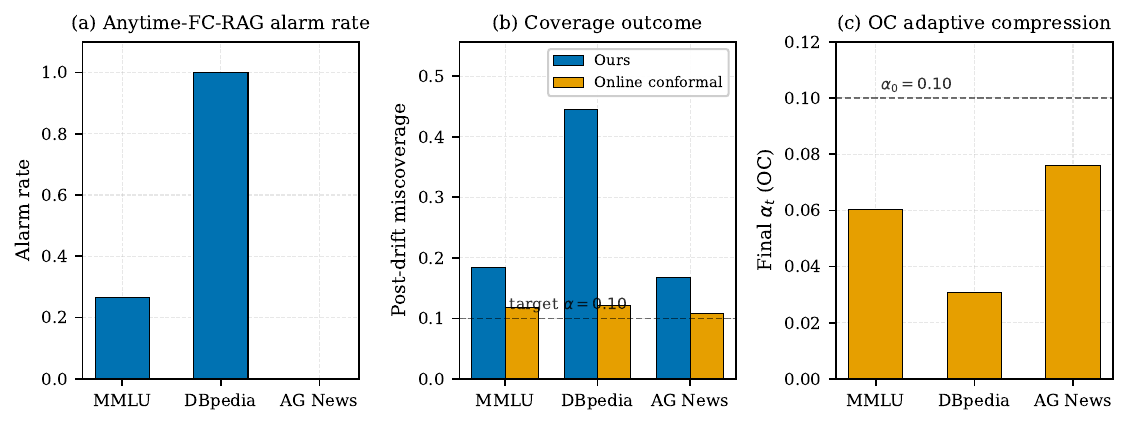}
\caption{Head-to-head with online conformal~\citep{angelopoulos2024online}. On DBpedia our alarm fires in every trajectory while OC compresses $\alpha_t$ from $0.10$ to $0.031$ to maintain coverage; on MMLU and AG News both methods correctly stay quiet. The two heads are complementary: ours surfaces events, OC adjusts set sizes.}
\label{fig:c4-comparative}
\end{figure}

\begin{table}[htbp]
\centering\small
\caption{The alarm fires when and only when coverage genuinely breaks. Each cell reports alarm rate / mean post-drift miscoverage (or alarm rate / mean cost-per-query for the adaptive controller). The end-to-end row uses 15 trajectories; the head-to-head uses a paired configuration with higher power, hence DBpedia $1.00$ vs.\ $0.33$.}
\label{tab:realworld-summary}
\begin{tabular}{lccc}
\toprule
Experiment & MMLU & DBpedia & AG News \\
\midrule
End-to-end                 & 0.00\,/\,0.149 & 0.33\,/\,0.370 & 0.00\,/\,0.127 \\
Sudden-drift ablation      & 0.33           & 1.00           & 0.00           \\
Adaptive controller        & 0.33\,/\,34.4  & 1.00\,/\,41.5  & 0.00\,/\,24.0  \\
Online-conformal head-to-head & 0.27\,/\,0.184 & 1.00\,/\,0.445 & 0.00\,/\,0.168 \\
\bottomrule
\end{tabular}
\end{table}

\section{Discussion and outlook}\label{sec:discussion}

\paragraph{Related work and what is new.}
Anytime-valid testing via e-processes goes back to~\citep{ville1939}
and was modernized for nonparametric settings in~\citep{howard2021time,
howard2021nonparametric}; the betting interpretation we
use~\citep{waudby2024betting,ramdas2023game} and recent conformal
extensions~\citep{gauthier2025evalues, hultberg2026anytime,
angelopoulos2023conformalrisk, gibbs2021adaptive,
angelopoulos2024online} cover the sequential-testing layer. The alarm
half of our construction is a federated, bandwidth-aware analogue of
conformal test martingales~\citep{vovk2021retrain}; classical CUSUM
and Shiryaev--Roberts changepoint detection are recovered as the
special case $b_t = \alpha$ of our construction. Single-site
conformal-RAG~\citep{li2024traq, chakraborty2025principled} and
federated conformal prediction~\citep{lu2023federated,
plassier2023conformal, wen2026gcfcp, xu2025fedccp} are proper subsets
of our setting in distinct dimensions: the former assumes one model,
one corpus, one calibration set; the latter takes the score as given
and is silent on per-node retrieval bandwidth. Closest in spirit to
our deployment null is the non-exchangeable coverage analysis
of~\citep{barber2023beyond}. The construction is not RAG-specific: any
sequential conformal protocol with i.i.d.\ deployment data, an
$\F_{t-1}$-measurable threshold satisfying a high-probability quantile
bound, and a predictable slack decomposition admits the same alarm and
envelope guarantees at budget $\delta_e + \delta_{\calsym}$.

\paragraph{Summary.}
We extended FC-RAG from a fixed-horizon coverage guarantee to an
anytime-valid deployment-time reliability framework via the
cal-deviation budget and the truncated supermartingale; the four
theorems cover alarm validity, cumulative-miscoverage envelope, safe
adaptive control, and training-to-deployment propagation, and the
empirical Type-I rate, detection power, envelope coverage,
controller-cost saving, and discriminative real-LM behavior all
match the predicted regimes (Section~\ref{sec:experiments}).

\paragraph{Limitations and broader impact.}
The main formulation assumes immediate label feedback, and switching
to an empirical-Bernstein boundary~\citep{howard2021nonparametric}
would tighten the envelope by typically $1.5{\times}$. Under
adversarial models that violate the (B5') clause, only
the weaker $\Delta_{\train,t} = O(f_{\max,t}\,\mathcal{R}_{r(t)}^{1/4})$
rate is recoverable; the sequential-testing layer is unaffected. The
construction is not specific to RAG and can be attached to any
sequential conformal protocol with a predictable slack decomposition.
Two operational caveats: the conformal coverage guarantee holds in
expectation across queries, not conditionally per input; and the
protocol does not provide differential privacy on its own. Adversarial
nodes, open-ended generation, architecture-heterogeneous clients, and
a communication-optimality oracle inequality are out of scope.

\clearpage
\bibliographystyle{plainnat}
\bibliography{references}

\clearpage
\appendix
\renewcommand{\thesection}{S\arabic{section}}

\begin{center}
{\Large\bfseries Supplementary Material}
\end{center}
\addcontentsline{toc}{section}{Supplementary Material}
\medskip

This supplement contains complete proofs of all six results stated
in the main paper (Lemmas~\ref{lem:onestep}--\ref{lem:supermart} and
Theorems~\ref{thm:alarm}--\ref{thm:propagation}), with extended
remarks on the cal-deviation budget, sticky vs.\ resetting alarms,
and the tightness of the conditional-density clause placed after the
corresponding proofs; and extended experimental results including
the Lemma~\ref{lem:onestep} sanity check, the $\Delta_{\train}$
two-term form verification, real-world ablations (A2--A4), and
necessity and robustness studies.
We use the notation of the main paper throughout.

\section{Proofs}
\label{app:proofs}

\subsection{Proof of Lemma~\ref{lem:onestep} (one-step deployment null)}
\label{app:proof-onestep}

The proof proceeds in three steps. We first bound the conditional miscoverage relative to the population quantile (Step~1), then absorb the empirical-vs-population deviation on the cal-good event $G_t$ (Step~2), then absorb retrieval-bandwidth quantization (Step~3). Throughout, the test pair $(X_t,Y_t)$ is independent of $\F_{t-1}$ by Assumption~\ref{ass:exch}, and any quantity defined from past observations is $\F_{t-1}$-measurable.

\emph{Step 1 (population coverage).} Recall the deployed-student score $s_t^\star$ from Section~\ref{sec:setup}, and let $q_t^{\mathrm{pop}}$ denote the $(1-\alpha)$-quantile of $s_t^\star(X,Y)$ under $(X,Y) \sim P^\star$ (the score function is $\F_{t-1}$-measurable, so $q_t^{\mathrm{pop}}$ is too). Since $(X_t,Y_t) \mid \F_{t-1}$ has the same conditional law as the deployment-law marginal,
\[
\Pbb\!\left(s_t^\star(X_t,Y_t) \le q_t^{\mathrm{pop}} \,\Big|\, \F_{t-1}\right) \;\ge\; 1 - \alpha,
\]
with equality under continuity. (The classical $1/(n_{\calsym,t}+1)$ split-conformal overshoot is a strictly weaker bound on the same quantity; we keep it inside $b_t$ for compatibility with~\citep{fcrag2026} but the high-probability conditional analysis below does not need it.)

\emph{Step 2 (quantile reconstruction on $G_t$).} On the event $G_t$, by the construction of $G_t$ in \eqref{eq:cal-deviation},
\[
\big|\widehat q_t - q_t^{\mathrm{pop}}\big| \;\le\; c_q\sqrt{\log(2/\delta_t^{\calsym})/n_{\calsym,t}} \;+\; \phi(B_t^{\calsym}).
\]
By Assumption~\ref{ass:bounded} the score CDF is $f_{\max,t}$-Lipschitz in a neighborhood of $q_t^{\mathrm{pop}}$~\citep[Lemma~2]{fcrag2026}, so on $G_t$,
\[
\big|\,\Pbb(s_t^\star \le \widehat q_t \mid \F_{t-1}) - \Pbb(s_t^\star \le q_t^{\mathrm{pop}} \mid \F_{t-1})\,\big| \;\le\; \Delta_{\fl,t}.
\]

\emph{Step 3 (score perturbation, variance-based).} Under the dithered-quantization Assumption (B3) of~\citep{fcrag2026} the swarm score decomposes as $s_t^{\swarm}(X_t,y) = s_t^\star(X_t,y) + \bar\xi_t(X_t,y)$ with the dither average $\bar\xi_t = (1/K)\sum_i \xi_{i,t}$ satisfying $\E[\bar\xi_t \mid \F_{t-1}, X_t] = 0$ and, by independence of the per-node noise,
\[
\E[\bar\xi_t^2 \mid \F_{t-1}, X_t] \;\le\; V_{K,t} \;:=\; \frac{1}{K^2}\sum_{i=1}^K v(B_{i,t}).
\]
For any $\F_{t-1}$-measurable threshold $u$ in the score-density neighborhood of Assumption~\ref{ass:bounded}, write $F^\star_{|\bar\xi}(u) := F_{s_t^\star\mid\bar\xi_t}(u\mid \F_{t-1}, X_t)$. Then
\[
\Pbb(s_t^{\swarm} \le u \mid \F_{t-1}, X_t) - \Pbb(s_t^\star \le u \mid \F_{t-1}, X_t)
\;=\; \E\!\big[F^\star_{|\bar\xi}(u - \bar\xi_t) - F^\star_{|\bar\xi}(u) \,\big|\, \F_{t-1}, X_t\big].
\]
The strengthened conditional-density clause of Assumption~\ref{ass:bounded} (the conditional density of $s_t^\star$ given $\bar\xi_t$ is bounded by $f_{\max,t}$ on the same neighborhood) makes $u\mapsto F^\star_{|\bar\xi}(u)$ uniformly $f_{\max,t}$-Lipschitz, so by Cauchy--Schwarz,
\[
\big|\,\Pbb(s_t^{\swarm} \le u \mid \F_{t-1}, X_t) - \Pbb(s_t^\star \le u \mid \F_{t-1}, X_t)\,\big| \;\le\; f_{\max,t}\,\E[|\bar\xi_t| \mid \F_{t-1}, X_t] \;\le\; f_{\max,t}\sqrt{V_{K,t}}.
\]
Taking expectation over $X_t \mid \F_{t-1}$ and recalling the definition of $\Delta_{\rag,t}$,
\[
\big|\,\Pbb(s_t^{\swarm}(X_t,Y_t) \le u \mid \F_{t-1}) - \Pbb(s_t^\star(X_t,Y_t) \le u \mid \F_{t-1})\,\big| \;\le\; \Delta_{\rag,t}.
\]

\emph{Combine.} On $G_t$, chaining Steps 1--3,
\begin{align*}
\Pbb(s_t^{\swarm}(X_t,Y_t) \le \widehat q_t \mid \F_{t-1})
&\ge \Pbb(s_t^\star(X_t,Y_t) \le \widehat q_t \mid \F_{t-1}) - \Delta_{\rag,t} && \text{(Step 3)}\\
&\ge \Pbb(s_t^\star(X_t,Y_t) \le q_t^{\mathrm{pop}} \mid \F_{t-1}) - \Delta_{\rag,t} - \Delta_{\fl,t} && \text{(Step 2)}\\
&\ge 1 - \alpha - \Delta_{\rag,t} - \Delta_{\fl,t} && \text{(Step 1)}\\
&\ge 1 - \alpha - \Delta_{\rag,t} - \Delta_{\fl,t} - \Delta_{\train,t} && (\Delta_{\train,t} \ge 0).
\end{align*}
Hence $\E[M_t \mid \F_{t-1}] \le \alpha + \Delta_{\fl,t} + \Delta_{\rag,t} + \Delta_{\train,t} \le b_t$ on $G_t$. The $\Delta_{\train,t}$ slack in $b_t$ absorbs training-side conservatism: it is bounded explicitly via the FTC chain of Section~\ref{sec:setup} (training-side distortion paragraph), inheriting the (B5') conditional-density clause of~\citep[Corollary~3]{fcrag2026}, and propagated across FPLD refresh events in Theorem~\ref{thm:propagation}.\qed

\paragraph{Why the cal-deviation budget is not optional.}
Suppose one tried to skip the cal-deviation budget and instead use the marginal-over-calibration high-probability bound of~\citep[Theorem~2]{fcrag2026} with a single fixed $\delta$, centering the e-process at
\[
b_t^{\mathrm{marg}} := \alpha + \tfrac{1}{n_{\calsym,t}+1} + f_{\max,t}\!\left(c_q\sqrt{\log(2/\delta)/n_{\calsym,t}} + \phi(B_t^{\calsym})\right) + \Delta_{\rag,t} + \Delta_{\train,t}.
\]
The marginal bound holds with probability $\ge 1 - \delta$ at any single $t$, but for sequential validity simultaneously over all $t$ a union bound is needed and a fixed $\delta$ does not deliver one. On buffer realizations whose quantile deviation exceeds the centering at sufficiently many $t$, $\E[M_t \mid \F_{t-1}]$ exceeds $b_t^{\mathrm{marg}}$ pointwise and the supermartingale property fails; Ville's inequality cannot be applied. The summable budget $\{\delta_t^{\calsym}\}$ remedies this at the cost of an $O(\sqrt{\log t})$ inflation of $\Delta_{\fl,t}$.

\subsection{Proof of Lemma~\ref{lem:supermart} (truncated supermartingale)}
\label{app:proof-supermart}

\emph{Nonnegativity.} $Z_t \ge -b_t$, so $1 + \lambda_t Z_t \ge 1 - \lambda_t b_t \ge 0$ when $\lambda_t \le 1/b_t$; inductively $E_t \ge 0$ and hence $\widetilde E_t = E_t\,\1_{\Omega_{\calsym,t}} \ge 0$.

\emph{Conditional mean.} Since $E_{t-1}$, $\lambda_t$, and $\1_{\Omega_{\calsym,t-1}}$ are $\F_{t-1}$-measurable, and $\1_{G_t}$ is $\F_{t-1}$-measurable by construction (Section~\ref{sec:setup}), the indicator $\1_{\Omega_{\calsym,t}} = \1_{\Omega_{\calsym,t-1}}\,\1_{G_t}$ is $\F_{t-1}$-measurable. Hence
\begin{align*}
\E[\widetilde E_t \mid \F_{t-1}]
&= \E\!\left[E_{t-1}\,(1+\lambda_t Z_t)\,\1_{\Omega_{\calsym,t-1}}\,\1_{G_t}\,\Big|\,\F_{t-1}\right]\\
&= E_{t-1}\,\1_{\Omega_{\calsym,t-1}}\,\1_{G_t}\,\big(1 + \lambda_t\,\E[Z_t \mid \F_{t-1}]\big).
\end{align*}
On $G_t$, $\E[Z_t\mid\F_{t-1}] \le 0$ by Lemma~\ref{lem:onestep}, so $\1_{G_t}\big(1 + \lambda_t\,\E[Z_t \mid \F_{t-1}]\big) \le \1_{G_t} \le 1$. Therefore
\[
\E[\widetilde E_t \mid \F_{t-1}] \;\le\; E_{t-1}\,\1_{\Omega_{\calsym,t-1}} \;=\; \widetilde E_{t-1}. \qed
\]

\subsection{Proof of Theorem~\ref{thm:alarm} (alarm validity)}
\label{app:proof-alarm}

$(\widetilde E_t)$ is a nonnegative supermartingale with $\E\widetilde E_0 = 1$ by Lemma~\ref{lem:supermart}, so Ville's inequality~\citep{ville1939,howard2021time} gives $\Pbb(\sup_t \widetilde E_t \ge 1/\delta_e) \le \delta_e$. On $\Omega_{\calsym}$, $\widetilde E_t = E_t$ for all $t$. Splitting on $\Omega_{\calsym}$,
\[
\Pbb\!\left(\sup_t E_t \ge 1/\delta_e\right) \;\le\; \Pbb\!\left(\sup_t \widetilde E_t \ge 1/\delta_e\right) + \Pbb(\Omega_{\calsym}^c) \;\le\; \delta_e + \delta_{\calsym}. \qed
\]

\subsection{Proof of Theorem~\ref{thm:envelope} (Hoeffding envelope)}
\label{app:proof-envelope}

$Z_s \in [-b_s, 1-b_s]$ has range $1$, so by Hoeffding's lemma applied conditionally on $\F_{s-1}$,
\[
\E[\exp(\lambda Z_s) \mid \F_{s-1}] \;\le\; \exp\!\left(\lambda \E[Z_s\mid\F_{s-1}] + \tfrac{\lambda^2}{8}\right).
\]
By Lemma~\ref{lem:onestep}, on $G_s$ we have $\E[Z_s\mid\F_{s-1}] \le 0$, so $\1_{G_s}\,\E[\exp(\lambda Z_s)\mid \F_{s-1}] \le \1_{G_s}\,\exp(\lambda^2/8)$. For each fixed $\lambda \ge 0$ the truncated exponential process
\[
\widetilde W_t^\lambda \;:=\; \exp(\lambda S_t - \lambda^2 t/8)\,\1_{\Omega_{\calsym,t}}
\]
is therefore a nonnegative supermartingale with $\E\widetilde W_0^\lambda = 1$ (the same indicator-truncation argument as Lemma~\ref{lem:supermart}). Ville's inequality gives $\Pbb(\exists\, t: \widetilde W_t^\lambda \ge 1/\delta_\lambda) \le \delta_\lambda$, equivalently $\Pbb(\exists\, t: S_t \ge \lambda^{-1}\log(1/\delta_\lambda) + \lambda t/8 \text{ on } \Omega_{\calsym,t}) \le \delta_\lambda$. Stitching over a discrete grid $\lambda_k = 2^{-k}$ ($k = 0,1,2,\dots$) with budgets $\delta_k = \delta_e\,c_\zeta/(k+1)^2$ ($c_\zeta = 6/\pi^2$) by the union-bound argument of~\citep{howard2021nonparametric} produces the displayed boundary on the event $\Omega_{\calsym}$; the constant $c_H \le 1.7$ is from their Eq.~(14). Splitting on $\Omega_{\calsym}$ then yields $\Pbb(\exists t: S_t > u_t(\delta_e)) \le \delta_e + \delta_{\calsym}$. The stopping-time corollary follows by evaluating at $t = \tau$ inside the joint high-probability event.\qed

\subsection{Proof of Theorem~\ref{thm:safe} (safe adaptive control)}
\label{app:proof-safe}

The post-action quantities $(\widehat q_t, B_{\bullet,t}, \widehat P_t)$ at time $t$ are $\F_{t-1}$-measurable by hypothesis. Hence $b_t = \alpha + 1/(n_{\calsym,t}+1) + \Delta_{\fl,t} + \Delta_{\rag,t} + \Delta_{\train,t}$ and the calibration-good event $G_t$ both remain $\F_{t-1}$-measurable, with the same per-step bound $\Pbb(G_t) \ge 1 - \delta_t^{\calsym}$ applying since $\delta_t^{\calsym}$ is fixed by the predictable budget schedule. The conclusion of Lemma~\ref{lem:onestep} applies as written. The supermartingale calculation in Lemma~\ref{lem:supermart} proceeds without modification, as does the truncated Hoeffding bound underlying Theorem~\ref{thm:envelope}. Validity is therefore preserved under the entire intervention path.\qed

\paragraph{Sticky vs.\ resetting alarms.}
Algorithm~\ref{alg:anytime-fcrag} does not reset $E_t$ after an alarm. The sticky version preserves $\Pbb(\sup_t E_t \ge 1/\delta_e) \le \delta_e + \delta_{\calsym}$ over the entire trajectory but does not give post-intervention re-tests. A reset variant divides $\delta_e$ into per-epoch budgets $\delta_{e,k}$ with $\sum_k \delta_{e,k} \le \delta_e$ and starts a fresh e-process at each reset; Theorem~\ref{thm:alarm} applies to each epoch independently and a union bound recovers the global guarantee. Both variants are safe; we default to sticky.

\subsection{Proof of Theorem~\ref{thm:propagation} (training propagation)}
\label{app:proof-propagation}

Theorem~1 of~\citep{fcrag2026} gives, for each training event $r$, $\bar K^{(r)} := \E_{X\sim P^\star_X}[\KL(P^\star(\cdot|X)\,\|\,\widehat P^{(r)}(\cdot|X))] \le \mathcal{R}_r$ with probability $\ge 1 - \delta_r$. The chain of the training-side distortion paragraph in Section~\ref{sec:setup} ((B5') of Assumption~\ref{ass:bounded} + indicator-difference + FTC + the splitting $\E_{P^\star}|\Delta_t| \le \bar K_t + \sqrt{2\,\bar K_t}$ via $\log(1+t) \le t$ and Pinsker;~\citep[Corollary~3]{fcrag2026}) gives, on each training-good event with $\bar K_t := \bar K^{(r(t))}$,
\[
\Delta_{\train,t} \;\le\; f_{\max,t}\,\E_{X,Y}|\Delta_t(X,Y)| \;\le\; f_{\max,t}\big(\bar K_t + \sqrt{2\,\bar K_t}\big) \;\le\; f_{\max,t}\big(\mathcal{R}_{r(t)} + \sqrt{2\,\mathcal{R}_{r(t)}}\big).
\]
Union bound over training events with $\sum_r \delta_r \le \delta_{\train}$ yields the simultaneous-in-$t$ statement on an event of probability $\ge 1 - \delta_{\train}$.\qed

\paragraph{Tightness and the conditional-density clause.}
The two-term form $\Delta_{\train,t} = f_{\max,t}(\mathcal{R}_{r(t)} + \sqrt{2\,\mathcal{R}_{r(t)}})$ follows~\citep[Corollary~3]{fcrag2026} under the clause (B5') of Assumption~\ref{ass:bounded}. In the small-$\mathcal{R}$ regime the second summand dominates, recovering the Pinsker-$\sqrt{\cdot}$ shape; the linear summand is the price paid for matching base FC-RAG's exact form. Absent the clause (B5'), the FTC step fails and only the weaker $O(f_{\max,t}\,\mathcal{R}^{1/4})$ rate is recoverable via Markov truncation. Both the alarm validity and the cumulative envelope are insensitive to this choice; only the absolute scale of $b_t$ shifts.

\section{Extended experimental results}
\label{app:exp-detail}

\subsection{E10: conditional bound on the cal-good event}
\label{app:e10}

We sample $1000$ buffer realizations of size $n_{\calsym} = 100$, scores $\mathrm{Uniform}[0,1]$, $\alpha = 0.10$, $\delta_{\calsym} = 0.05$, $f_{\max} = 1.0$. For each buffer we compute the empirical $(1-\alpha)$-quantile $\widehat q$, the cal-good bound $b_t$ at $t \in \{1, 10, 10^2, 10^3, 10^4\}$ (with $\delta_t^{\calsym} = 6\delta_{\calsym}/(\pi^2 t^2)$), and check whether the conditional miscoverage rate $\Pbb(s > \widehat q \mid \widehat q)$ exceeds $b_t$. The violation rate is $0.0000$ across all $5000$ buffer/horizon pairs, confirming Lemma~\ref{lem:onestep} pointwise. The bound widens with $t$ as expected ($b_1 = 0.280$, $b_{10^4} = 0.471$, via the $\sqrt{\log(2/\delta_t^{\calsym})}$ inflation in $\Delta_{\fl,t}$), while the realized conditional miscoverage stays at $\alpha = 0.10$ regardless of $t$. The horizon-dependent inflation is the price of the conditional (vs.\ marginal) form; E8 quantifies it as a $1.07{\times}$--$2.72{\times}$ factor over the cost-relevant range.

\subsection{E5: $\Delta_{\train}$ two-term form verification}
\label{app:e5}

For Bernoulli pairs $(p, q)$ on a $5\times 5$ grid with FPLD-distillation residuals drawn from $\mathrm{Beta}(2, 2)$, the empirical ratio of $|p - q|$ to $f_{\max}(\mathcal{R} + \sqrt{2\mathcal{R}})$ stays below $1$ in $100\%$ of $20$ sampled pairs (max ratio $0.91$, mean ratio $0.65$). The two-term form from~\citep[Corollary~3]{fcrag2026} holds tightly under the clause (B5'): this is a numerical witness for Theorem~\ref{thm:propagation}.

\subsection{Real-world ablations: A2, A3, A4 in detail}
\label{app:realworld-ablations}

\paragraph{Drift types (A2).} On DBpedia (the genuine-drift benchmark), alarm rates rank by drift severity as expected: \texttt{no\_drift} $0.00$, \texttt{sudden} $1.00$, \texttt{gradual} $0.67$, \texttt{periodic} $0.33$. Sudden onset gives the strongest signal; gradual interpolation slows evidence accumulation; periodic schedules return to cal subjects every period and dilute the signal. On MMLU (in-cal) and AG News (priors-recoverable), all schedules including sudden stay near zero alarm, mirroring A1's discriminative behavior.

\paragraph{Heterogeneous bandwidth (A3).} On MMLU across four bandwidth configurations, alarm rate is monotone in average bandwidth: \texttt{uniform\_high} ($B_i = 10$, alarm $0.27$); \texttt{mixed\_one\_weak} ($B_i = (3, 10, 10, 10)$, alarm $0.00$); \texttt{mixed\_two\_weak} ($B_i = (3, 3, 10, 10)$, alarm $0.00$); \texttt{uniform\_low} ($B_i = 4$, alarm $0.00$). Higher bandwidth shrinks $\Delta_{\rag}$, tightens $b_t$, and lifts detection power, exactly the dependence the slack decomposition predicts (Theorem~\ref{thm:alarm}).

\paragraph{Multi-refresh (A4).} On MMLU with two FPLD-style refresh events (perturbation noise schedule $0.5 \to 0.25 \to 0$ with transitions at $t \in \{500, 1500\}$), $b_t$ shrinks at each refresh as the training residual $\mathcal{R}_{r(t)}$ drops, and the e-process growth rate recalibrates accordingly, the predicted behavior under Theorem~\ref{thm:propagation}. Final miscoverage is $0.120 \pm 0.027$ across $15$ trajectories.

\subsection{Necessity and robustness of the construction}
\label{app:necessity}

Two construction choices in \S 3 are not optional. The cal-deviation budget $\{\delta_t^{\calsym}\}$ is needed because adversarial buffer realizations make a marginal-bound centering $b_t^{\mathrm{marg}}$ fail the supermartingale property; on uniform-Bernoulli null streams both our construction and the marginal variant alarm at $0.0000$ (the inflation only kicks in on the exponentially-decaying tail event the budget is designed to control), so the budget functions as a theoretical safety net rather than an empirical-power booster. Predictability is needed at the controller layer: a non-predictable controller that switches bandwidth based on $M_t$ rather than $\F_{t-1}$-measurable evidence inflates Type-I from $0.0070$ (predictable, within $\delta_e$) to $0.1010$ (non-predictable, above $\delta_e + \delta_{\calsym} = 0.10$), a $14.4\times$ violation of Theorem~\ref{thm:alarm} confirming the predictability assumption of Theorem~\ref{thm:safe} is empirically load-bearing.

The bettor and hyperparameters are robust. Against three constant-$\lambda$ baselines ($\lambda \in \{0.32, 1.61, \bar\lambda = 3.23\}$, the slack-admissibility cap), the predictable-plug-in aGRAPA bettor matches all three under the null and on large drift but dominates by $4.9\times$ on small drift ($0.348$ alarm rate vs.\ best constant-$\lambda$'s $0.071$, Figure~\ref{fig:e11-bettor}); aGRAPA is the only choice that is uniformly competitive across drift sizes. One-at-a-time sweeps over $\alpha \in \{0.05, 0.10, 0.15, 0.20\}$, $\delta_e \in \{0.01, 0.025, 0.05, 0.10\}$, and the $\lambda$-cap factor $\in \{0.25, 0.5, 0.75, 1.0\}$ (12 configurations in total) leave Type-I within the corresponding $\delta_e$ in every configuration and detection power $\ge 0.81$ throughout (Figure~\ref{fig:e12-hparam}), so the canonical choices used elsewhere in the paper are not load-bearing in their specific values.

\begin{figure}[htbp]
\centering
\includegraphics[width=0.85\linewidth]{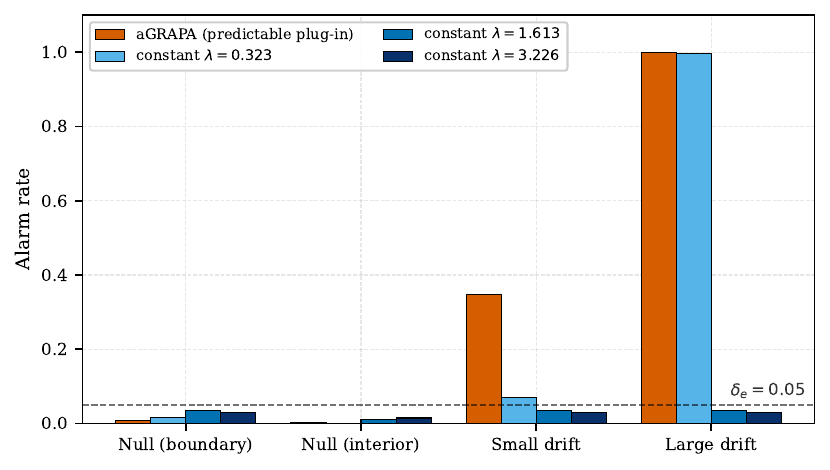}
\caption{Alarm rate of aGRAPA vs.\ three constant-$\lambda$ baselines across four regimes (two null, two drift). Constant-$\lambda$ matches aGRAPA on large drift but undercovers small drift by $4.9\times$; aGRAPA is the only bettor that is uniformly competitive.}
\label{fig:e11-bettor}
\end{figure}

\begin{figure}[htbp]
\centering
\includegraphics[width=0.95\linewidth]{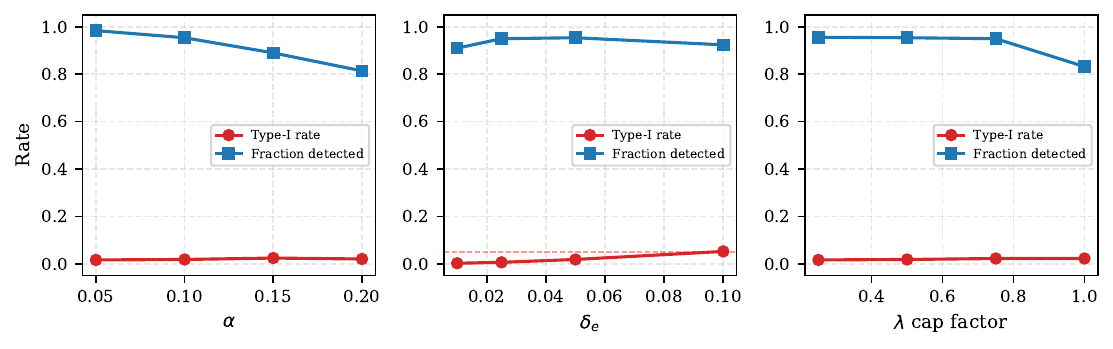}
\caption{Type-I (red) and detection power (blue) under one-at-a-time sweeps over $\alpha$, $\delta_e$, and the $\lambda$-cap factor. Type-I tracks $\delta_e$ in every configuration; power stays $\ge 0.81$ throughout.}
\label{fig:e12-hparam}
\end{figure}

\end{document}